\documentclass[conference]{IEEEtran}
\IEEEoverridecommandlockouts
\newcommand{\numauthorrefmark}[1]{\textsuperscript{#1}}
\usepackage{cite}

\usepackage{mathrsfs}
\usepackage{amsmath,amssymb,amsfonts}
\usepackage{graphicx}
\usepackage[caption=false,font=footnotesize]{subfig}
\usepackage{textcomp}
\usepackage{url}
\usepackage{mathtools}
\usepackage{bbm}
\usepackage{accents}
\usepackage[dvipsnames]{xcolor}
\usepackage{amsthm}
\usepackage{algorithm}
\usepackage{algpseudocode}
\usepackage{hyperref}
\usepackage{cleveref}
\usepackage{enumitem} \definecolor{MajorNotesColor}{HTML}{2a8c2f}

\newcommand{\argmin}{\operatornamewithlimits{argmin}}

\DeclareDocumentCommand\vectorbold{ s m }{\IfBooleanTF{#1}{\boldsymbol{#2}}{\mathbf{#2}}}  

\newcommand{\nom}{\mathrm{nom}}
\newcommand{\pinom}{\pi_{\nom}}
\newcommand{\pifallback}{\pi_{\mathrm{fb}}}
\newcommand{\pifb}{\pifallback}
\newcommand{\piadv}{\pi_{\mathrm{adv}}}
\newcommand{\Vadv}{V_{\mathrm{adv}}}

\newcommand{\Jfb}{J^{\pifallback}}

\newcommand{\Dadmissible}{\mathcal{D}_{\mathrm{feas}}}

\newcommand{\Vhj}{V_{\mathrm{HJ}}}

\newcommand{\XSet}{\mathcal{X}}

\newcommand{\T}{^{\top}}

\DeclarePairedDelimiterX{\abs}[1]{\lvert}{\rvert}{#1}
\DeclarePairedDelimiterX{\norm}[1]{\lVert}{\rVert}{#1}

\definecolor{AlgColorLEAP}{HTML}{0D47A1}     \definecolor{AlgColorBaseline}{HTML}{4A148C} 

\newcommand{\AlgLEAP}{\textcolor{AlgColorLEAP}{\textsf{LEAP}}}

\newcommand{\AlgHJ}{\textcolor{AlgColorBaseline}{\textsf{HJ}}}
\newcommand{\AlgHJI}{\textcolor{AlgColorBaseline}{\textsf{HJI}}}
\newcommand{\AlgHCBF}{\textcolor{AlgColorBaseline}{\textsf{HCBF}}}
\newcommand{\AlgRHCBF}{\textcolor{AlgColorBaseline}{\textsf{R-HCBF}}}

\newcommand{\crowdnav}{\textsf{Crowd Navigation}}
\newcommand{\posswap}{\textsf{Position Swapping}}

\newcommand{\dubinsra}{\textsf{Dubins Reach-Avoid}}
\newcommand{\quadrupeddrone}{\textsf{Quadruped-Drone}}
 \newcommand*\diff{\mathop{}\!\mathrm{d}}

\setlist[enumerate]{leftmargin=*}
\setlist[itemize]{leftmargin=*}

\theoremstyle{definition}
\newtheorem{definition}{Definition}

\theoremstyle{plain}
\newtheorem{theorem}{Theorem}

\newtheorem{corollary}{Corollary}
\crefname{figure}{fig.}{figs.}
\Crefname{figure}{Fig.}{Figs.}
\crefname{corollary}{cor.}{cors.}
\Crefname{corollary}{Cor.}{Cors.}
\theoremstyle{remark}

\begin{document}
\bstctlcite{noUrl}
\title{LEAP-CBF: A Safety Filter for Uncertain Systems with Least-Effort Adversarial Potentials}
\author{Oswin So\numauthorrefmark{1*}, Eric Yu\numauthorrefmark{1*}, Chuchu Fan\numauthorrefmark{1} \thanks{\numauthorrefmark{1} Massachusetts Institute of Technology.}
\thanks{\numauthorrefmark{*} Equal Contribution.}
}
\maketitle
\thispagestyle{plain}
\pagestyle{plain}
\begin{abstract}
Control barrier functions (CBF) are a popular safety filter to ensure safety for nonlinear dynamical systems.
However, when the system is subject to uncertainties and disturbances, this requires the use of robust variants of CBFs,
which can be difficult to construct and can be overly conservative, especially for high-dimensional systems under input constraints.
In this work, we propose a new approach to solve these challenges by introducing Least-Effort Adversarial Potentials (LEAP), a certificate that quantifies the robustness of a given state against disturbances in terms of the effort required by the disturbance to cause failure.
We show that LEAP is a CBF for the undisturbed system, but can also be used to construct a safety filter that is robust to disturbances whose cumulative effort is bounded.
We propose a method for constructing LEAPs with on-policy deep reinforcement learning.
Next, we demonstrate LEAPs in simulation on a variety of multi-agent systems with disturbances and uncertainties.
Finally, hardware experiments on a quadruped and quadrotors validate that LEAPs are well suited to tackle the disturbances and uncertainties from real-world robotic systems.

 \end{abstract}
\section{Introduction}
\label{sec:introduction}

Robotic systems face unmodeled dynamics, sensor noise, and environmental disturbances~\cite{thrun2002probabilistic}.
A robot navigating among erratically moving agents must avoid collisions while making task progress.
Safety filters modify task commands when safety is threatened~\cite{ames2016control,hsu2023safety}, using a certificate of the robot's ability to avoid failure.
But should this certificate measure clearance or tolerance to disturbances?

Control barrier functions (CBFs) \cite{ames2019cbfsurvey} and Hamilton--Jacobi (HJ) reachability \cite{choi2021cbvf} provide principled tools for constructing safety filters.
However, a safety value need not directly quantify tolerance to disturbances.
For example, an HJ value defined through minimum signed distance along a trajectory accounts for dynamics, but its magnitude measures clearance rather than the disturbance effort needed to cause failure.
As illustrated in \Cref{fig:teaser}, a robot moving away from a nearby obstacle may be difficult to drive into it despite its small clearance.
Robust HJ formulations commonly certify safety against disturbances in a prescribed pointwise set~\cite{choi2021cbvf,hsunguyen2023isaacs}.
A pointwise disturbance bound permits the adversary to act persistently at that bound.
When disturbances are brief but large in magnitude, the resulting safety filter can be overly conservative, restricting task progress.
Energy-bounded reachability and IQC-based CBFs already account for cumulative or dynamic uncertainty~\cite{yin2020reachability,seiler2021control}.
This motivates an explicit effort-to-failure certificate that connects cumulative disturbance budgets to runtime filtering.

\begin{figure}[!t]
    \centering
\includegraphics[width=\linewidth]{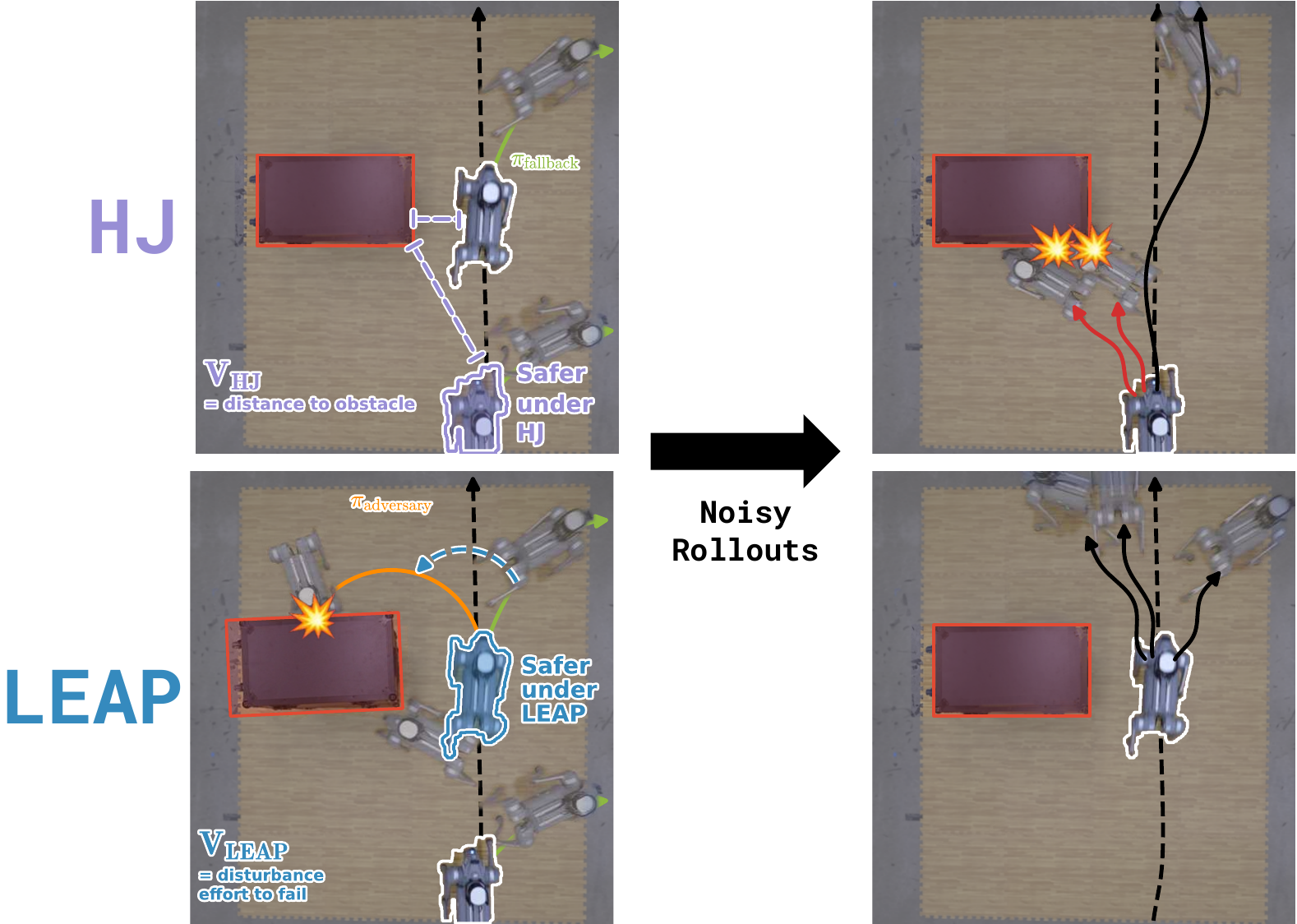}
\caption{\textbf{LEAP measures safety by the effort required to cause failure.}
    A quadruped, modeled as a unicycle, moves past a box obstacle (red).
    Although its geometric clearance is small, causing a collision can require substantial disturbance effort to redirect its motion.
    LEAP quantifies this effort under a fallback policy, providing a robustness margin that reflects the dynamics and disturbance model.
    }
    \label{fig:teaser}
\end{figure}

We introduce the \textbf{Least-Effort Adversarial Potential (LEAP)}: the minimum cumulative disturbance effort required to cause failure under a fixed fallback policy.
Computing LEAP is a minimum-cost reachability problem with the disturbance as the sole optimization variable.
LEAP-CBF uses this potential to filter task commands under a cumulative disturbance budget.
Under appropriate regularity assumptions, positive-threshold shifts of the exact LEAP value are CBFs for the undisturbed dynamics.
We further derive safety filters that guarantee safety under a cumulative disturbance budget.
To approximate LEAP in practice, we develop an on-policy reinforcement learning procedure that alternates adversary training and fallback improvement, followed by a final value-estimation stage with the fallback fixed.

\noindent\textbf{Robust safety under integral uncertainty.}
Robust and input-to-state safe CBFs account for disturbances~\cite{xu2015robustness,jankovic2018robust,kolathaya2019input}, often characterized by pointwise bounds.
Conservatism can be reduced through tunable disturbance accommodation~\cite{alan2022safe} or learning the effect of model uncertainty on the barrier derivative~\cite{taylor2020learning}.
Measurement-robust CBFs address state-estimation errors~\cite{dean2021guaranteeing}, a distinct uncertainty source from the disturbances entering LEAP's dynamics.
Cumulative uncertainty is also well established through energy-bounded reachability and integral quadratic constraints (IQCs)~\cite{yin2020reachability,yin2021backward}, IQC-based CBFs~\cite{seiler2021control}, and integral input-to-state safe barrier functions~\cite{lyu2025integral}.
LEAP builds on this perspective by quantifying the minimum cumulative disturbance effort required to cause failure under a fixed fallback, yielding an explicit robustness margin for disturbance-budget filtering.

\noindent\textbf{Minimum-cost reachability and failure search.}
Reachability is closely connected to optimal control~\cite{lygeros2004reachability}: running-cost HJB formulations characterize cost-limited reachable sets~\cite{liao2021costlimited}, and RL methods address minimum-cost reach-avoid problems~\cite{so2024solving}.
Adaptive stress testing optimizes a simulator's stochastic inputs to find likely failure trajectories~\cite{lee2020adaptive}.
LEAP likewise targets failure under specified system behavior, but minimizes cumulative disturbance effort and uses the resulting value as a certificate for online safety filtering.
Its contribution is this connection to nominal CBFs and disturbance-budget safety, rather than introducing running-cost reachability.

\noindent\textbf{Policy-based and predictive safety filters.}
Backup and policy CBFs construct certificates around specified policies~\cite{chen2021backup,so2024train}, with robust extensions addressing disturbances and input constraints~\cite{knoedler2025safety}.
Connections between learned value functions and CBFs have also been established under suitable task structures~\cite{tan2023value}.
Predictive safety filters use online model-based optimization to enforce constraints~\cite{wabersich2021predictive}; auxiliary soft-constrained predictive problems can themselves define barriers and provide recovery toward the feasible set~\cite{wabersich2023predictive}.
Adversarial predictive rollouts offer another runtime filtering mechanism~\cite{nguyen2024gameplayfiltersrobustzeroshot}.
Stochastic CBFs have also been constructed from a learned adversarial time-to-collision metric, predicting collision between a nonreactive evader and a dynamically constrained pursuer~\cite{sorensen2026time}.
This certificate measures the time available before collision, whereas LEAP quantifies a cumulative disturbance budget under a fixed fallback policy.
For control-affine dynamics with additive disturbances and quadratic effort, its precomputed value yields a value-gradient QP requiring no online adversarial rollout.

\noindent\textbf{Learned adversarial certificates.}
Joint controller--adversary training is established in robust RL~\cite{pinto2017robust}, with safety-oriented formulations learning robust game strategies~\cite{hsunguyen2023isaacs} and state--action barrier certificates~\cite{oh2026synthesis}.
Neural HJ solvers~\cite{bansal2021deepreach} and MPC-guided adversarial supervision~\cite{teoh2026madr} provide alternative approaches to approximating safety values.
LEAP uses cumulative-return RL and fixes the fallback for its final effort-value estimation.
Reachability-based refinement~\cite{tonkens2022refining} and local patching of approximately safe values~\cite{tonkens2024patching} address certificate approximation errors.
Our guarantees concern the exact LEAP certificate; critic fitting and polishing are evaluated empirically and do not provide such verification.

\noindent\textbf{Contributions.} We summarize our contributions as follows:
\begin{enumerate}
    \item We formulate LEAP as a minimum-disturbance-effort reachability value and establish its connection to CBFs and safety under a cumulative disturbance budget.
    \item We derive LEAP-based safety filters and develop a deep reinforcement learning procedure to approximate the potential and learn its fallback policy.
    \item We evaluate LEAP-CBF on noisy Dubins navigation, adversarial crowd navigation, and multi-agent position swapping with tracking latency, and on a Unitree Go2 avoiding a Crazyflie drone. The experiments show improved safety--task performance tradeoffs relative to the evaluated HJ and CBF baselines.
\end{enumerate}
 \section{Problem Setup}
\label{sec:preliminaries}
We consider continuous-time dynamics with disturbances as
\begin{equation} \label{eq:dyn}
    \dot{x}_t = f(x_t, u_t, d_t),
\end{equation}
with state $x_t\in\mathcal X\subset\mathbb R^{n_x}$, control $u_t\in\mathcal U\subset\mathbb R^{n_u}$, and measurable disturbance $d_t\in\mathcal D\subseteq\mathbb R^{n_d}$, possibly unbounded.
The goal is to avoid a failure set $\mathcal F\subset\mathcal X$ despite unknown disturbances.
Assuming $0\in\mathcal D$, define the nominal system by
\begin{equation} \label{eq:nominal_dyn}
    \dot{\bar{x}}_t=\bar f(\bar{x}_t,u_t),\qquad \bar f(x,u)\coloneqq f(x,u,0).
\end{equation}
\begin{definition}[Control Barrier Function]
    A continuously differentiable function $B: \XSet \to \mathbb{R}$ is a (zeroing) Control Barrier Function (CBF) for the nominal system \eqref{eq:nominal_dyn} if there exists an extended class-$\mathcal{K}$ function $\alpha$ \cite{ames2016control} such that
    \begin{subequations}
    \begin{align}
        B(x) &< 0, \quad \forall x \in \mathcal{F}, \label{eq:cbf:sign} \\
        B(x) &\geq 0 \implies \sup_{u \in \mathcal{U}} \nabla B(x)\T \bar{f}(x, u) \geq -\alpha(B(x)). \label{eq:cbf:descent}
    \end{align}
    \end{subequations}
\end{definition}
Any sufficiently regular policy satisfying \eqref{eq:cbf:descent} renders $\mathcal S=\{x:B(x)\geq0\}$ forward invariant~\cite{ames2016control}.
For control-affine dynamics $\bar f(x,u)=f_0(x)+g(x)u$, where $f_0:\mathcal X\to\mathbb R^{n_x}$ and $g:\mathcal X\to\mathbb R^{n_x\times n_u}$, this condition is linear in $u$, giving the CBF-QP safety filter for nominal command $\pinom(x)$:
\begin{subequations} \label{eq:cbf_qp}
\begin{align}
    \argmin_{u \in \mathcal{U}} \quad & \frac{1}{2}\|u - \pi_{\mathrm{nom}}(x)\|_2^2 \label{eq:cbf_qp:objective}\\ \text{s.t.} \quad & \nabla B(x)^\top \bigl(f_0(x) + g(x)u\bigr) \geq -\alpha(B(x)). \label{eq:cbf_qp:constraint}
\end{align}
\end{subequations}

\begin{figure}
    \centering
    \includegraphics[width=\linewidth]{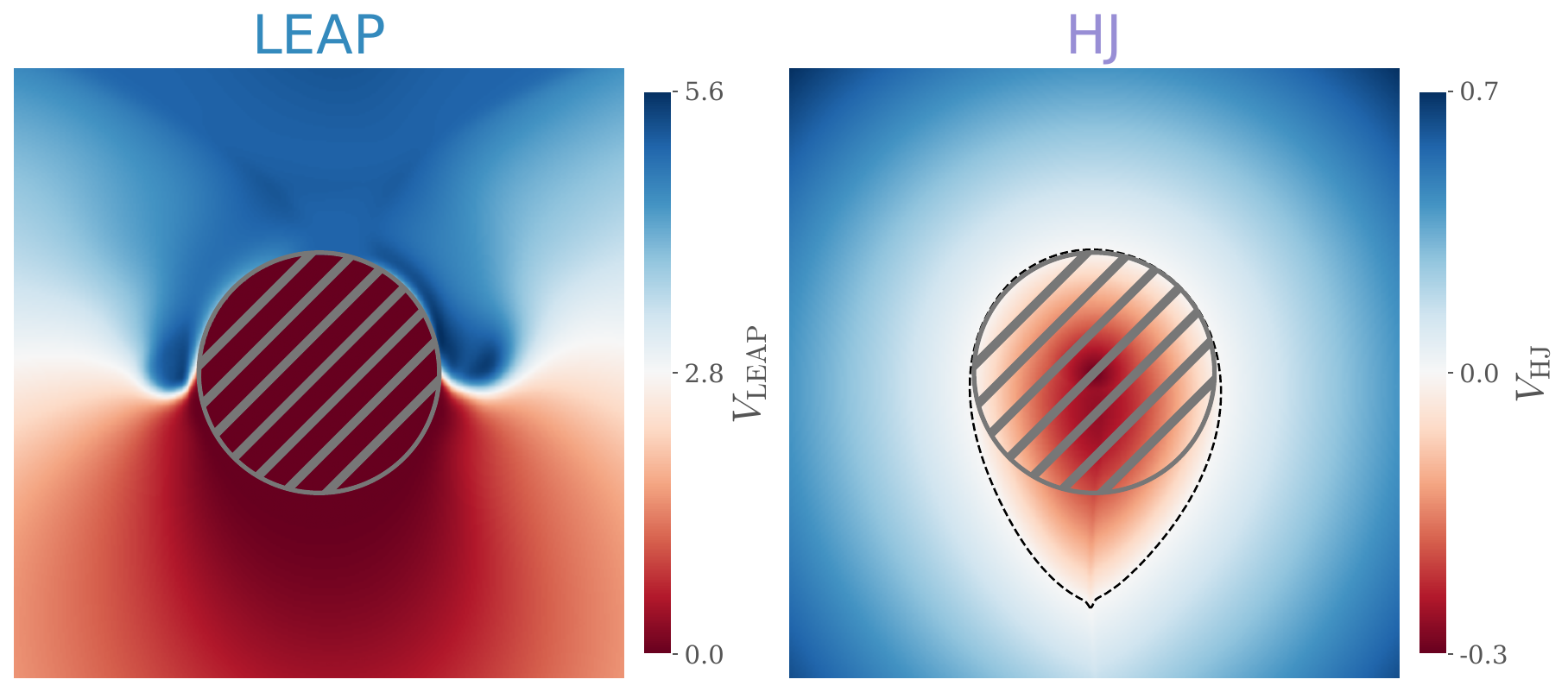}
    \caption{
        \textbf{LEAP vs HJ value functions for a Dubins car.}
        We visualize the LEAP value $\Jfb$ and the Hamilton-Jacobi (HJ) value $\Vhj$ for a Dubins car pointed up with an circular obstacle at the origin, where the disturbance is additive in the steering.
        $\Vhj$ uses the signed distance to the obstacle as the surface function $h$.
        LEAP puts states above the obstacle as the most robust, since colliding into the obstacle requires a large disturbance effort to turn the car around.
        The HJ value has a strong distance bias and does not distinguish between states above and below the obstacle once far enough away.
    }
    \label{fig:leap_visualization}
\end{figure}

\begin{figure*}[t]
    \centering
    \includegraphics[width=\linewidth]{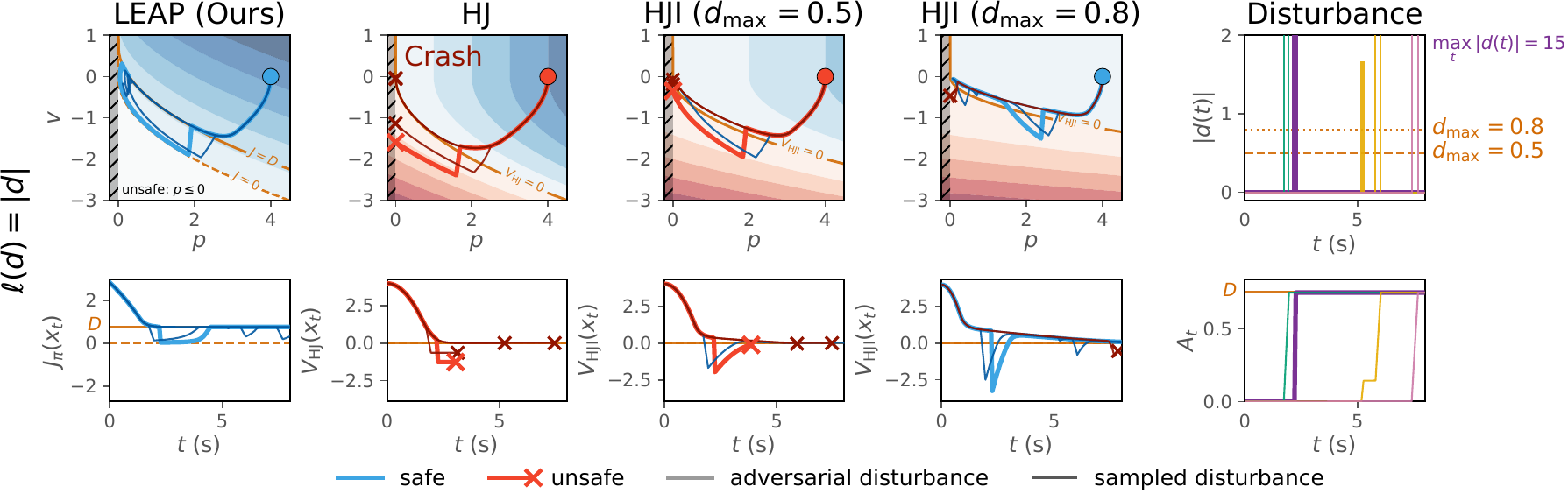}
\caption{\textbf{Cumulative-effort budgets accommodate short, large disturbance peaks.}
    A double integrator $\dot p=v$, $\dot v=u+d$, with $p\leq0$ unsafe, receives the same disturbances under LEAP ($\pi_{\mathrm{sw+qp}}$, \Cref{thm:switch_qp}), HJ, and HJI filters.
    Top: phase trajectories over the respective certificates. Bottom: certificate values over time.
    Right: disturbance magnitudes and accumulated absolute effort $A_t$.
    Peaks exceed the assumed HJI bounds, while $A_t<D$.
    LEAP remains safe in the displayed rollouts; HJ and both HJI configurations admit collisions.
    Blue/red indicate safety/collision, and crosses mark collisions.
    Crossing a certificate's zero contour does not itself imply collision.}
    \label{fig:dbint_analytic_compare}
\end{figure*}

\section{Safety for Undisturbed Systems}
\label{sec:leap}

For a fixed fallback policy $\pifallback$ and nonnegative effort rate $\ell$ with $\ell(0)=0$, define the LEAP value as
\begin{subequations} \label{eq:leap_value}
\begin{align}
\Jfb (x_0)
\coloneqq
\inf_{d(\cdot)} \quad
& \int_0^T \ell(d_t)\, dt \label{eq:leap_value:obj} \\
\text{s.t.} \quad
& \dot{x}_t = f\bigl(x_t, \pifallback(x_t), d_t\bigr), \label{eq:leap_value:dynamics} \\
& T \coloneqq \inf\{t \geq 0 : x_t \in \mathcal{F}\} < \infty.
\end{align}
\end{subequations}
where the infimum over the empty set is $+\infty$.
Thus $\Jfb\geq0$, with $\Jfb=0$ on $\mathcal F$ and $\Jfb=+\infty$ when no disturbance can cause failure.
\eqref{eq:leap_value} is a running-cost reachability problem~\cite{liao2021costlimited,so2024solving} in which the disturbance is the optimization variable and the fallback is fixed. \Cref{fig:leap_visualization} compares LEAP with a nominal signed-distance HJ value for a Dubins car.

\noindent \textbf{LEAP as a safety certificate for the nominal systems.}
An important property of $\Jfb$ is that it is a CBF under the nominal dynamics \eqref{eq:nominal_dyn}, which we show in the following theorem.
\begin{theorem}
    For any $D>0$, let $\Jfb_D(x) \coloneqq \Jfb(x) - D$ and $\mathcal{S}_D \coloneqq \{x : \Jfb_D(x) \geq 0\}$.
    Then, $\Jfb_D$ is a CBF for the nominal dynamics \eqref{eq:nominal_dyn}.
\end{theorem}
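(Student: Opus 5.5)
We check the two CBF conditions for $\Jfb_D$, assuming (as the paper's ``appropriate regularity assumptions'' indicate) that $\Jfb$ is continuously differentiable on the relevant region and $\pifallback$ is regular enough for closed-loop solutions to exist. The sign condition \eqref{eq:cbf:sign} is immediate. Since $\Jfb=0$ on $\mathcal F$ (take $T=0$ and any disturbance), we get $\Jfb_D(x)=-D<0$ for every $x\in\mathcal F$. This is why the theorem needs a strictly positive shift $D>0$. The unshifted $\Jfb$ equals $0$ on $\mathcal F$ and so violates the strict inequality.

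For the descent condition \eqref{eq:cbf:descent}, we will show that the fallback control $u=\pifallback(x)$ gives $\nabla\Jfb(x)\T\bar f(x,\pifallback(x))\ge 0$ wherever $\Jfb$ is finite. This implies $\sup_{u}\nabla\Jfb_D(x)\T\bar f(x,u)\ge 0\ge-\alpha(\Jfb_D(x))$ whenever $\Jfb_D(x)\ge0$, for any extended class-$\mathcal K$ function $\alpha$. The key fact is a dynamic-programming (sub-optimality) inequality along nominal fallback trajectories. Fix $x_0$ with $\Jfb(x_0)\ge D>0$, so $x_0\notin\mathcal F$. Let $\bar x_t$ be the closed-loop nominal trajectory $\dot{\bar x}_t=\bar f(\bar x_t,\pifallback(\bar x_t))$ from $x_0$, which is the system \eqref{eq:leap_value:dynamics} with $d\equiv0$. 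For small $h>0$, take any disturbance $d'(\cdot)$ that is admissible from $\bar x_h$ and drives the system into $\mathcal F$ with cost $c$. Concatenating $d_t=0$ on $[0,h)$ with $d'_{t-h}$ afterwards gives an admissible disturbance from $x_0$. Its cost is $\int_0^h\ell(0)\,dt+c=c$, because $\ell(0)=0$. Its failure time is finite, because it is at most $h$ plus the failure time of $d'$. If $\bar x$ hits $\mathcal F$ before time $h$, the cost is $0$ and $\Jfb(x_0)=0$, which contradicts $\Jfb(x_0)\ge D$. Taking the infimum over $d'$ therefore gives $\Jfb(x_0)\le\Jfb(\bar x_h)$. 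If $\Jfb(\bar x_h)=+\infty$ the inequality holds trivially. In words, $\Jfb$ is nondecreasing along nominal fallback trajectories.

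Next we differentiate. Dividing $\Jfb(\bar x_h)-\Jfb(x_0)\ge0$ by $h$ and letting $h\downarrow0$ gives, by the chain rule and differentiability of $\Jfb$ at $x_0$, $\nabla\Jfb(x_0)\T\bar f(x_0,\pifallback(x_0))\ge0$. Since $\nabla\Jfb_D=\nabla\Jfb$ and $\pifallback(x_0)\in\mathcal U$, the supremum over $u$ is at least this value, and the descent condition follows. The guarantee is in fact stronger than the CBF condition requires: it holds with $\alpha\equiv$ any extended class-$\mathcal K$ function, since the right-hand side $-\alpha(\Jfb_D)\le0$ on $\mathcal S_D$.

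The main obstacle is regularity. $\Jfb$ can take the value $+\infty$, and value functions of reachability problems are typically only Lipschitz or even discontinuous. The plan is to state the result on the set where $\Jfb$ is finite and $C^1$, which is exactly what the theorem's regularity hypotheses provide. On the region where $\Jfb\equiv+\infty$, the fallback keeps the state in that region by the same concatenation argument, so safety is trivial there. If only Lipschitz continuity were available, I would restate the descent condition using Dini or Clarke directional derivatives. The monotonicity argument above delivers exactly that version without change.
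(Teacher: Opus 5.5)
Your proof is correct, but it reaches the descent condition by a different route than the paper. The paper appeals to the stationary HJB equation $0=\inf_d\{\ell(d)+\nabla\Jfb(x)^\top f(x,\pifallback(x),d)\}$ on $\mathcal{F}^\complement$ and then bounds the infimum by its value at $d=0$. You instead derive the needed inequality directly from the definition of $\Jfb$. You concatenate $d\equiv0$ on $[0,h)$ with an arbitrary failure-inducing continuation to get $\Jfb(x_0)\le\Jfb(\bar x_h)$, and only then differentiate.

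The paper's chain in fact uses only the half $0\le\inf_d\{\cdots\}$ of the HJB relation. That is the suboptimality (``easy'') direction of dynamic programming, and your concatenation proves exactly that half at $d=0$.

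The paper's version is a two-line argument, but it relies on $\Jfb$ satisfying the HJB equation pointwise. That is a nontrivial fact: value functions generally satisfy it only in the viscosity sense, and only under hypotheses on $f$, $\ell$ and a possibly unbounded $\mathcal{D}$ that the paper does not state. Your version is self-contained and uses only $\ell(0)=0$ and $0\in\mathcal{D}$. It also gives a derivative-free statement, monotonicity of $\Jfb$ along undisturbed fallback trajectories. That statement alone gives forward invariance of $\mathcal{S}_D$ under $\pifallback$, and it survives in Dini form if $\Jfb$ is only Lipschitz.

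The same concatenation works with any constant $d$ on $[0,h)$, for $h$ small enough that $h\,\ell(d)<\Jfb(x_0)$. It then yields the inequality $\inf_d\{\ell(d)+\nabla\Jfb(x)^\top f(x,\pifallback(x),d)\}\ge0$ that the paper later invokes for its budget QP.

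You are also right to make explicit what the paper leaves implicit. The CBF definition needs $\Jfb_D$ to be real-valued and $C^1$, so the statement tacitly requires $\Jfb$ finite and differentiable on $\{\Jfb\ge D\}$. And $D>0$ is exactly what makes the strict sign condition on $\mathcal{F}$ hold.
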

\begin{proof}
    Since $\Jfb=0$ on $\mathcal F$, $\Jfb_D(x) = -D < 0$ for all $x \in \mathcal{F}$, which satisfies the first CBF condition \eqref{eq:cbf:sign}.
    For the second condition \eqref{eq:cbf:descent}, since $\Jfb$ is the value function of an optimal control problem, it satisfies the stationary Hamilton-Jacobi-Bellman (HJB) equation \cite{bardi1997optimal} on the complement of the failure set $\mathcal{F}^\complement = \XSet \setminus \mathcal{F}$.
    \begin{align}
        0
        &= \inf_d \{ \ell(d) + \nabla \Jfb(x)\T f(x, \pifallback(x), d) \}, \\
        &\leq  \ell(0) + \nabla \Jfb(x)\T f(x, \pifallback(x), 0), \\
        &\leq \sup_u \nabla \Jfb(x)\T \bar{f}(x, u).
    \end{align}
    For any $x \in \mathcal{S}_D$, $\alpha(\Jfb_D(x)) \geq 0$. Thus,
    \begin{equation}
        \sup_u \nabla \Jfb_D(x)\T \bar{f}(x, u) \geq 0 \geq -\alpha(\Jfb_D(x)),
    \end{equation}
    and the second CBF condition \eqref{eq:cbf:descent} holds.
\end{proof}
Consequently, using $\Jfb$ to construct a CBF-QP safety filter \eqref{eq:cbf_qp} guarantees that the set $\mathcal{S}_D \subseteq \mathcal{F}^\complement$ is forward invariant.

\noindent\textbf{Interpreting the robustness of the HJ value.}
For the nominal HJ reachability formulation considered here, the value
$\Vhj(x)=\sup_\pi\inf_{t\geq0}h(\bar{x}_t)$
measures the best achievable minimum of a chosen surface function
$h$ along trajectories~\cite{bansal2017hamilton}.
Although $\{x:h(x)\leq0\}=\mathcal F$ specifies failure, the
value's magnitude depends on the choice of $h$.
With signed distance, it measures clearance along trajectories,
which need not reflect tolerance to disturbances.
\Cref{fig:teaser} illustrates this distinction:
HJ and LEAP rank states differently, and the state favored by
HJ experiences more collisions in the displayed noisy rollouts.
In contrast, LEAP's magnitude measures the minimum cumulative
disturbance effort required for failure under the fallback,
giving its thresholds a direct disturbance-budget interpretation.

\section{Safety for Disturbed Systems}
We now return to the original problem of ensuring safety under disturbances.
One way to specify this problem is through a pointwise disturbance set, as in standard robust HJ reachability~\cite{bansal2017hamilton}.
Following the broader study of integral uncertainty~\cite{yin2020reachability,seiler2021control}, we instead express robustness through the cumulative disturbance effort in the LEAP value~\eqref{eq:leap_value}.

For a disturbance signal $(d_t)_{t \geq 0}$, we define the \textit{disturbance budget} $A_t$ as the total accumulated disturbance effort $\ell$, i.e.,
\begin{equation} \label{eq:disturbance_budget}
    A_t = \int_0^t \ell(d_s) \diff s.
\end{equation}
With quadratic effort, \eqref{eq:disturbance_budget} measures disturbance energy, while absolute effort measures cumulative disturbance magnitude.

Let $D > 0$ be a fixed disturbance budget.
We now consider the problem of ensuring safety under disturbances whose total effort is bounded by $D$.
In other words, we define the set of admissible disturbance signals $\Dadmissible$ as
\begin{equation}
    \Dadmissible = \{(d_t)_{t \geq 0} : A_t < D, \forall t \geq 0\}.
\end{equation}
We define the notion of disturbance-budget safety as follows.
\begin{definition}[Disturbance-Budget Safety]
    For a given disturbance budget $D > 0$, a system started from $x_0$ is safe under budget $D$ if it does not enter the failure set $\mathcal{F}$ for any disturbance signal $(d_t)_{t \geq 0} \in \Dadmissible$.
\end{definition}

Unlike the pointwise amplitude bound $|d_t|\leq d_{\max}$ used by HJI, an effort budget permits short, large peaks but limits their cumulative effect.
\Cref{fig:dbint_analytic_compare} illustrates this distinction: the disturbances exceed the assumed HJI bounds while satisfying $A_t<D$ for $\ell(d)=|d|$.
LEAP remains safe in the displayed rollouts, whereas HJ and HJI admit collisions outside their disturbance assumptions.
While HJI can be applied despite the mismatch with its assumed pointwise amplitude bounds, this mismatch leads to conservative behavior and higher intervention effort (\Cref{fig:dbint_analytic_hji_vs_leap}). 

\begin{figure}[t]
    \centering
    \includegraphics[width=\linewidth]{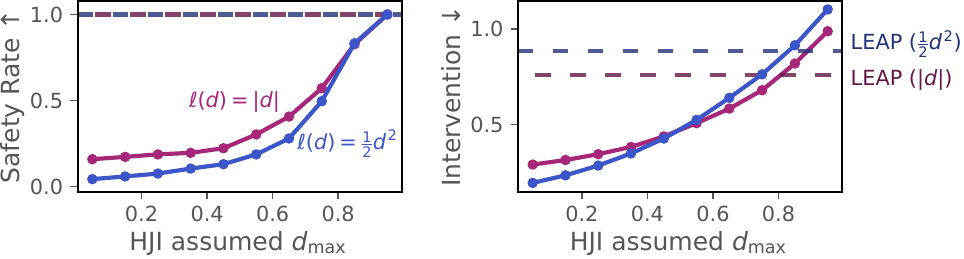}
    \caption{\textbf{Safety and intervention under effort-bounded disturbances.}
Larger HJI bounds improve observed safety but increase intervention effort.
    LEAP attains the same observed safety rate as the safest evaluated HJI configurations with less intervention.}
    \label{fig:dbint_analytic_hji_vs_leap}
\end{figure}

\begin{figure}[t]
    \centering
    \includegraphics[width=\linewidth]{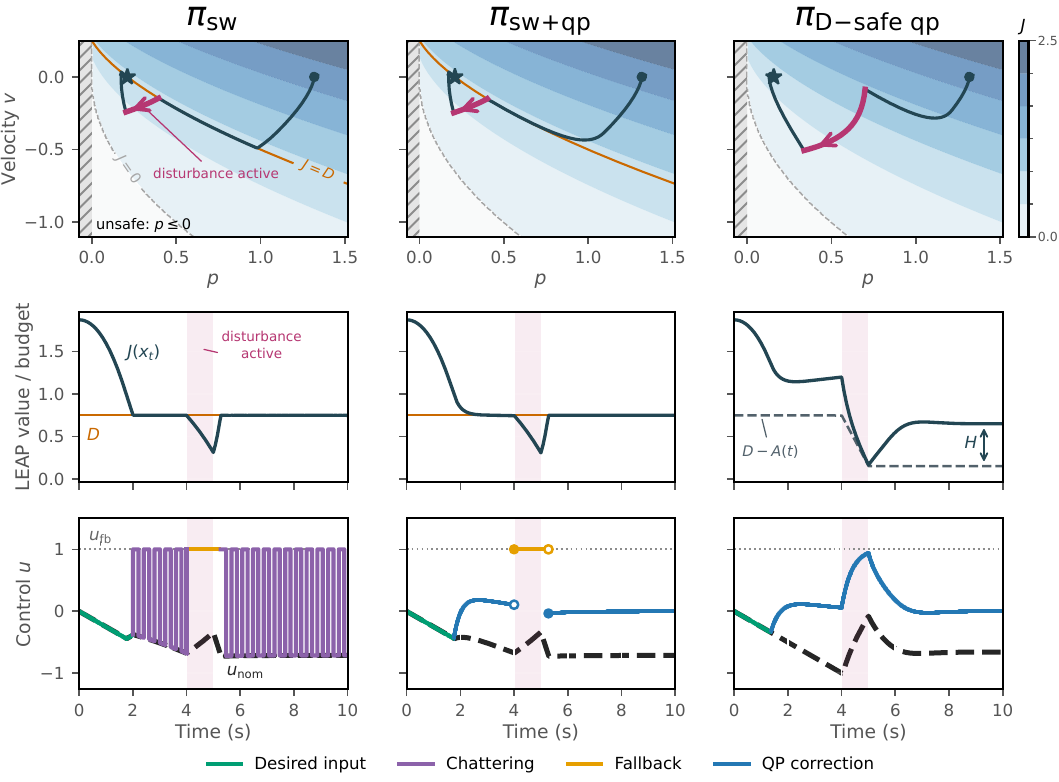}
    \caption{\textbf{Three ways to enforce disturbance-budget safety.}
We compare $\pi_{\mathrm{sw}}$ \eqref{eq:budget_switch}, $\pi_{\mathrm{sw+qp}}$ \Cref{thm:switch_qp}, and the budget-aware QP \eqref{eq:budget_qp} on the same scenario as \Cref{fig:dbint_analytic_compare}.
    $\pi_{\mathrm{sw}}$ chatters near $J=D$.
  Adding a nominal CBF-QP $\pi_{\mathrm{sw+qp}}$ smooths intervention but retains fallback switching when $J\leq D$.
  The budget-aware QP $\pi_{\mathrm{budget}}$ preserves $J\geq D-A_t$, but requires tracking the accumulated disturbance effort $A_t$.
    }
    \label{fig:dbint_analytic_leap_mechanisms}
\end{figure}

Given the similarities between the LEAP value \eqref{eq:leap_value} and the disturbance-budget safety definition,
it is natural to ask whether the LEAP value can also be used to construct a safety filter that guarantees disturbance-budget safety.
For convenience, for $D > 0$, let $\Jfb_D(x) \coloneqq \Jfb(x) - D$ and $\mathcal{S}_D \coloneqq \{x : \Jfb_D(x) \geq 0\}$.
We now answer in the affirmative with the following theorems.
\begin{theorem}[disturbance-budget safety certificate]\label{thm:budget}
    $\Jfb_D$ certifies the safety of the set $\mathcal{S}_D$ under budget $D$ in the sense that, under the fallback policy $\pifallback$, all states $x_0 \in \mathcal{S}_D$ are safe under budget $D$.
\end{theorem}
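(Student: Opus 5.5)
Since $\Jfb$ is an infimum of cumulative disturbance effort over all failure-inducing signals, the theorem should follow directly from the definition~\eqref{eq:leap_value}, with no HJB regularity needed. I would argue by contradiction. Fix $x_0 \in \mathcal{S}_D$, so $\Jfb(x_0) \geq D$. Suppose some admissible signal $(d_t)_{t\geq 0} \in \Dadmissible$ drives the closed-loop system $\dot x_t = f(x_t, \pifallback(x_t), d_t)$ from $x_0$ into $\mathcal{F}$. Let $T \coloneqq \inf\{t \geq 0 : x_t \in \mathcal{F}\} < \infty$ be the first hitting time.

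The key step is to note that this same signal, or its restriction to $[0,T]$, is feasible for the optimization problem~\eqref{eq:leap_value}. It generates the trajectory through the fallback dynamics~\eqref{eq:leap_value:dynamics}, and its hitting time is finite. Hence, by definition of the infimum,
\begin{equation*}
    \Jfb(x_0) \leq \int_0^T \ell(d_t)\,dt = A_T .
\end{equation*}
Admissibility gives $A_t < D$ for every $t \geq 0$, in particular $A_T < D$. Therefore $\Jfb(x_0) < D$, contradicting $x_0 \in \mathcal{S}_D$. So no admissible signal causes failure, and $x_0$ is safe under budget $D$.

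The only point needing care is the strictness of the inequalities. The definition of $\Dadmissible$ uses the strict bound $A_t < D$, while $\mathcal{S}_D$ uses the non-strict bound $\Jfb \geq D$, so the contradiction closes exactly. A secondary technicality is the case $x_0 \in \mathcal{F}$ with $T = 0$. There $\Jfb(x_0) = 0 < D$, so such states are not in $\mathcal{S}_D$ and the case does not arise. I would also remark that the trajectory must be well defined, which holds under the standing assumption that measurable disturbances yield (Carathéodory) solutions. Beyond this, the argument is purely a feasibility-of-the-infimum observation, and I expect no substantive obstacle.
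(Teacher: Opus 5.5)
Your proposal is correct and is the paper's proof: an admissible signal causing failure under $\pifallback$ is feasible for \eqref{eq:leap_value}, so $D \le \Jfb(x_0) \le A_T < D$, a contradiction. Your remarks on the strict versus non-strict inequalities and on the case $x_0 \in \mathcal{F}$ are sound, and the same argument covers the case $\Jfb(x_0)=\infty$ that the paper mentions, since then no feasible signal exists at all.
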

\begin{proof}
    If a disturbance in $\Dadmissible$ causes finite-time failure at $T$ under $\pifallback$ from $x_0 \in \mathcal{S}_D$, it is feasible in \eqref{eq:leap_value}, giving
    \begin{equation}
        D \leq \Jfb(x_0) \leq \int_0^T \ell(d_t)\,\diff t = A_T < D,
    \end{equation}
    which is a contradiction. This also holds for $\Jfb(x_0) = \infty$.
\end{proof}
This result enables the following switching safety filter that guarantees disturbance-budget safety. For a nominal policy $\pinom : \XSet \to \mathcal{U}$, define the switching policy
\begin{equation}\label{eq:budget_switch}
    \pi_{\mathrm{sw}}(x) \coloneqq
    \begin{cases}
        \pinom(x), & \Jfb_D(x) > 0, \\
        \pifallback(x), & \Jfb_D(x) \leq 0.
    \end{cases}
\end{equation}
We now prove that this switching policy guarantees disturbance-budget safety.
\begin{theorem}[Disturbance-budget safety of a switching filter]\label{thm:budget_switch}
    Suppose $\Jfb$ is finite and continuous on $\XSet$.
    Then, every $x_0 \in \mathcal{S}_D$ is safe under budget $D$ under $\pi_{\mathrm{sw}}$.
\end{theorem}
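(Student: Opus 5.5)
The plan is to argue by contradiction. I will locate the last time before failure at which the trajectory still satisfies $\Jfb \geq D$. From that time onward only the fallback is active, so I can reuse the reasoning of \Cref{thm:budget} from that intermediate state.

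Fix $x_0 \in \mathcal{S}_D$ and a disturbance $(d_t)_{t\geq 0} \in \Dadmissible$. Let $x_t$ be a corresponding closed-loop solution under $\pi_{\mathrm{sw}}$. Since $\pi_{\mathrm{sw}}$ is discontinuous, I will take solutions in the Carath\'eodory sense and assume they exist. Suppose, for contradiction, that $T \coloneqq \inf\{t\geq 0 : x_t\in\mathcal F\} < \infty$. Because $\mathcal F$ is closed (or by taking the failure time as attained), $x_T \in \mathcal F$, so $\Jfb(x_T)=0<D$. Since $\Jfb$ is finite and continuous and $t\mapsto x_t$ is continuous, $t\mapsto \Jfb(x_t)$ is a continuous real-valued function on $[0,T]$. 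Define
\begin{equation*}
\sigma \coloneqq \sup\{t\in[0,T] : \Jfb(x_t)\geq D\}.
\end{equation*}
This set contains $0$ because $x_0\in\mathcal S_D$. By continuity, $\Jfb(x_\sigma)\geq D$, which forces $\sigma<T$. Moreover, $\Jfb(x_t)<D$ for all $t\in(\sigma,T]$, so by continuity $\Jfb(x_\sigma)=D$.

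On $(\sigma,T]$ we have $\Jfb_D(x_t)<0$, so $\pi_{\mathrm{sw}}(x_t)=\pifallback(x_t)$. The single instant $t=\sigma$ does not affect a Carath\'eodory solution (and in any case $\Jfb_D(x_\sigma)=0$ also selects the fallback). Hence the shifted pair $\tilde x_s \coloneqq x_{\sigma+s}$ and $\tilde d_s \coloneqq d_{\sigma+s}$, for $s\in[0,T-\sigma]$, solves \eqref{eq:leap_value:dynamics} from the initial state $x_\sigma$. It reaches $\mathcal F$ at time $T-\sigma<\infty$, and it does not reach $\mathcal F$ earlier, since $T$ is the first failure time. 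Therefore $\tilde d$ is feasible for \eqref{eq:leap_value} at $x_\sigma$. Using $\ell\geq0$ and $(d_t)\in\Dadmissible$,
\begin{equation*}
D=\Jfb(x_\sigma)\leq\int_0^{T-\sigma}\ell(\tilde d_s)\,\diff s = A_T-A_\sigma\leq A_T<D,
\end{equation*}
which is a contradiction. So no failure occurs, and $x_0$ is safe under budget $D$.

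The main obstacle is not this inequality chain but the regularity bookkeeping behind it. First, I need a well-defined closed-loop solution under the discontinuous switch. I would simply assume existence of Carath\'eodory (or Filippov) solutions; with Filippov solutions one must additionally check that near the switching surface the convexified dynamics coincide with the fallback on $(\sigma,T]$. Second, I need the restriction of a trajectory to $[\sigma,T]$ to be an admissible trajectory of the LEAP problem started at $x_\sigma$, which holds because measurability of $d$ is preserved under time shifts. Third, the continuity and finiteness of $\Jfb$ are used exactly once: to guarantee $\Jfb(x_\sigma)=D$, and hence that the trajectory cannot jump from $\Jfb>D$ to the fallback region while skipping the level $D$.
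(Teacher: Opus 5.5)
Your proposal is correct and follows essentially the same argument as the paper: take the last time $\sigma$ at which $\Jfb(x_\sigma) = D$, observe that the switch selects $\pifallback$ on $[\sigma, T]$, and note that the time-shifted disturbance then drives the fallback into $\mathcal{F}$ with effort $A_T - A_\sigma \leq A_T < D$, which is a contradiction. The differences are cosmetic. You inline the inequality chain from the definition of $\Jfb$ instead of citing \Cref{thm:budget}. You also make explicit the solution-concept and closedness caveats that the paper leaves implicit.
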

\begin{proof}
    Suppose a disturbance in $\Dadmissible$ drives the switched system from $x_0 \in \mathcal{S}_D$ to $x_T \in \mathcal{F}$ at some finite time $T$.
    Since $\Jfb(x_T)=0<D\leq\Jfb(x_0)$, continuity gives a last threshold crossing $s=\max\{t\in[0,T]:\Jfb(x_t)=D\}$.
    On $(s,T]$, $\Jfb(x_t)<D$, so the filter follows $\pifallback$.
    The shifted disturbance $\widetilde d_t=d_{s+t}$ belongs to $\Dadmissible$, since its accumulated effort satisfies $\widetilde A_t=A_{s+t}-A_s\leq A_{s+t}<D$ for all $t\geq0$.
    Under this disturbance, the fallback trajectory from $x_s\in\mathcal{S}_D$ reaches $x_T\in\mathcal{F}$, contradicting \Cref{thm:budget}.
\end{proof}
\begin{corollary} \label{thm:switch_qp}
    Consider the CBF-QP safety filter \eqref{eq:cbf_qp} with $\Jfb_D$ as the CBF, but switching to the fallback policy $\pifallback$ whenever the QP is infeasible or $\Jfb_D(x) \leq 0$.
    Call this controller $\pi_{\mathrm{sw+qp}}$.
    Then, for any $x_0 \in \mathcal{S}_D$, the system is safe under budget $D$ under this safety filter
\end{corollary}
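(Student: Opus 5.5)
The plan is to reuse the last-threshold-crossing argument from the proof of \Cref{thm:budget_switch}, inheriting its standing assumption that $\Jfb$ is finite and continuous on $\XSet$. The key observation is that $\pi_{\mathrm{sw+qp}}$ agrees with $\pifallback$ whenever $\Jfb_D(x)\leq 0$. So the only place where it differs from $\pi_{\mathrm{sw}}$ is the region $\{\Jfb_D>0\}$, and the argument for \Cref{thm:budget_switch} never used the identity of the controller there.

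\textbf{Setup.} Suppose, for contradiction, that some disturbance $(d_t)\in\Dadmissible$ drives the closed loop under $\pi_{\mathrm{sw+qp}}$ from $x_0\in\mathcal S_D$ into $\mathcal F$ at a finite time $T$. The control in the region $\{\Jfb_D>0\}$ may be either the QP solution or the fallback (on QP infeasibility), and we make no use of which.

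\textbf{Threshold crossing.} Since $\Jfb(x_T)=0<D\le\Jfb(x_0)$ and $t\mapsto \Jfb(x_t)$ is continuous (as $\Jfb$ and the trajectory are continuous), the set $\{t\in[0,T]:\Jfb(x_t)=D\}$ is closed and nonempty. Define
\begin{equation}
s\coloneqq\max\{t\in[0,T]:\Jfb(x_t)=D\}.
\end{equation}
On $(s,T]$ we have $\Jfb(x_t)<D$, i.e.\ $\Jfb_D(x_t)<0$. By the definition of $\pi_{\mathrm{sw+qp}}$, the control there is $\pifallback(x_t)$, regardless of the QP.

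\textbf{Contradiction.} On $[s,T]$ the trajectory is therefore a fallback trajectory from $x_s$, with $\Jfb(x_s)=D$, so $x_s\in\mathcal S_D$. It is driven by the shifted disturbance $\widetilde d_t=d_{s+t}$, whose accumulated effort satisfies $\widetilde A_t=A_{s+t}-A_s\le A_{s+t}<D$; the inequality uses $\ell\ge0$. Hence $\widetilde d\in\Dadmissible$, and this fallback trajectory reaches $\mathcal F$, contradicting \Cref{thm:budget}.

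\textbf{Main obstacle.} The delicate point is well-posedness of the closed loop. The switched, possibly discontinuous controller, with QP infeasibility toggling on and off, must admit a solution $x_t$ that is absolutely continuous, so that continuity of $\Jfb(x_t)$ and the identification "control $=\pifallback$ on $(s,T]$" make sense. I would handle this by stating that we consider any (Carathéodory) solution of the closed loop. The argument only needs continuity of the trajectory and the pointwise control law on $(s,T]$, where the law coincides with $\pifallback$.

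The boundary point $t=s$ itself is measure-zero and does not affect the dynamics, so the fallback trajectory on $(s,T]$ extends continuously to $s$. In particular, no property of the CBF-QP (such as forward invariance from the CBF theorem) is needed; the QP only affects behavior strictly inside $\mathcal S_D$.
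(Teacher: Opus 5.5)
Your proposal is correct and takes essentially the paper's route. The corollary is given without a separate proof because $\pi_{\mathrm{sw+qp}}$ is just $\pi_{\mathrm{sw}}$ from \eqref{eq:budget_switch} with the nominal policy replaced by ``QP solution if feasible, else $\pifallback$'', so the last-threshold-crossing argument of \Cref{thm:budget_switch} applies verbatim; that argument never uses the control on $\{\Jfb_D>0\}$, and you re-derive it exactly, along with two sensible points the paper leaves implicit: inheriting the finiteness/continuity hypothesis and allowing any Carath\'eodory solution of the switched closed loop.
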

Unlike the CBF-QP, the set $\mathcal{S}_D$ is \textit{not} forward invariant under $\pi_{\mathrm{sw}}$.
Intuitively, the extra margin $D$ adds just enough margin such that a disturbance signal would need to exceed the budget $D$ to drive the system into failure.
However, we can construct a variant of the CBF-QP that does not require switching to the fallback policy $\pifallback$ and still guarantees disturbance-budget safety, which we show next.
\begin{theorem}[Disturbance-budget CBF-QP filter]\label{thm:budget_qp}
    Suppose $\Jfb$ is finite and continuous on $\XSet$, and continuously differentiable wherever $\Jfb>0$.
    Let $H(x,A)\coloneqq\Jfb(x)-D+A$, with $A_t$ given by \eqref{eq:disturbance_budget}, and choose $\kappa>0$.
    Consider the filter $\pi_{\mathrm{D\text{-}safe\ qp}}(x,A)$ defined by the minimizer of
    \begin{equation}\label{eq:budget_qp}
        \begin{aligned}
\min_{u\in\mathcal{U}}\quad
            &\tfrac12\|u-\pinom(x)\|^2\\
            \text{s.t.}\quad
            &\inf_{d\in\mathcal{D}}\{\nabla\Jfb(x)^\top f(x,u,d)+\ell(d)\}\\
            &\qquad\geq-\kappa H(x,A).
        \end{aligned}
    \end{equation}
    The constraint is feasible whenever $H\geq0$ and $A<D$.
    From any $x_0\in\mathcal{S}_D$, this filter preserves $H(x_t,A_t)\geq0$ while $A_t<D$ and guarantees disturbance-budget safety.
\end{theorem}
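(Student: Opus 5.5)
Feasibility comes first and uses the stationary HJB equation satisfied by $\Jfb$, as in the proof that $\Jfb_D$ is a nominal CBF. At any $x$ with $\Jfb(x)>0$, where $\Jfb$ is differentiable, we have
\begin{equation*}
0=\inf_{d\in\mathcal D}\{\ell(d)+\nabla\Jfb(x)^\top f(x,\pifallback(x),d)\}.
\end{equation*}
The choice $u=\pifallback(x)$ therefore makes the left side of the constraint in \eqref{eq:budget_qp} equal to $0$. Since $H\geq0$ and $\kappa>0$, we also have $0\geq-\kappa H(x,A)$, so the QP is feasible.

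We still need $\Jfb(x)>0$ whenever $H\geq0$ and $A<D$. This follows from $H\geq0$, which gives $\Jfb(x)\geq D-A>0$. This also shows that such states lie outside $\mathcal F$, because $\Jfb=0$ on $\mathcal F$.

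Next I show that $H$ is preserved along closed-loop trajectories. Take an admissible disturbance and the resulting trajectory, with $x_0\in\mathcal S_D$ and $A_0=0$, so that $H(x_0,A_0)=\Jfb(x_0)-D\geq0$. While $H\geq0$ and $A_t<D$, the argument above gives $\Jfb(x_t)>0$, so $\Jfb$ is $C^1$ near $x_t$. Hence $t\mapsto H(x_t,A_t)$ is absolutely continuous there, with derivative for almost every $t$
\begin{equation*}
\tfrac{\diff}{\diff t}H=\nabla\Jfb(x_t)^\top f(x_t,u_t,d_t)+\ell(d_t).
\end{equation*}
The constraint is an infimum over $d$, so it holds for the realized $d_t$, and we obtain $\dot H\geq-\kappa H$. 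A comparison (Grönwall) argument then gives $H(x_t,A_t)\geq e^{-\kappa t}H(x_0,0)\geq0$.

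To turn this into a statement on all of $[0,\infty)$, I use a first-exit argument. Let $\tau$ be the first time $H$ becomes negative. Continuity of $\Jfb$ and of $A_t$ makes $H$ continuous, and the differential inequality holds on $[0,\tau)$, so $H(\tau)\geq0$. Applying local forward invariance from time $\tau$ contradicts the definition of $\tau$. This uses the assumption that $A_t<D$ for all $t$, which is what disturbances in $\Dadmissible$ guarantee.

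Safety then follows directly. For all $t$ we have $\Jfb(x_t)\geq D-A_t>0$, while $\Jfb=0$ on $\mathcal F$, so $x_t\notin\mathcal F$. This is disturbance-budget safety.

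\textbf{Main obstacle.} I expect the technical difficulty to be the regularity of the closed loop. The minimizer of \eqref{eq:budget_qp} must yield a well-defined, measurable (ideally locally Lipschitz) feedback so that solutions exist, and the constraint is only a pointwise infimum over a possibly unbounded $\mathcal D$. My first attempt would be to state that existence of closed-loop solutions is assumed, as is standard for CBF-QPs. The invariance argument then uses only the differential inequality along whatever absolutely continuous solution exists. Because $\nabla\Jfb$ exists only where $\Jfb>0$, I would be careful to apply the chain rule only on the interval where $H\geq0$ and $A<D$, which keeps the trajectory strictly inside that region.
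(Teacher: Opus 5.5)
Your proposal is correct and follows essentially the same route as the paper. Both arguments use the same steps:
\begin{itemize}
\item feasibility of $u=\pifallback(x)$ from the fallback HJB relation combined with $\Jfb(x)\geq D-A>0$;
\item $\dot H\geq-\kappa H$ because the infimum constraint holds for the realized $d_t$;
\item a Gr\"onwall bound;
\item a continuity/first-time argument yielding $\Jfb(x_t)\geq D-A_t>0$.
\end{itemize}
The only differences are minor. The paper places its first-time argument on $\Jfb(x_t)$ reaching $0$ rather than on $H$ becoming negative. Both versions implicitly assume that a closed-loop solution exists with the constraint enforced along it, which you correctly flag.
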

\begin{proof}
    The fallback HJB inequality gives $\inf_d\{\nabla\Jfb(x)^\top f(x,\pifallback(x),d)+\ell(d)\}\geq0$.
    Thus $u=\pifallback(x)$ is feasible when $H\geq0$ and $A<D$, since then $\Jfb(x)>0$ and $-\kappa H\leq0$.
    Along the filtered trajectory, $\dot A_t=\ell(d_t)$ and \eqref{eq:budget_qp} imply
    \begin{equation}
        \dot H_t=\nabla\Jfb(x_t)^\top f(x_t,u_t,d_t)+\ell(d_t)
        \geq-\kappa H_t.
    \end{equation}
    Since $H_0=\Jfb(x_0)-D\geq0$, integration gives $H_t\geq e^{-\kappa t}H_0\geq0$ while $\Jfb(x_t)>0$ and $A_t<D$.
    By continuity, a first time with $\Jfb(x_t)=0$ and $A_t<D$ would give $H_t=A_t-D<0$, a contradiction.
    Hence $\Jfb(x_t)\geq D-A_t>0$ whenever $A_t<D$, which excludes failure.
\end{proof}
For control-affine dynamics with additive disturbances $f(x,u,d)=f_0(x)+g(x)u+E(x)d$, $\ell(d)=\tfrac12d^\top Rd$ is quadratic with $R\succ0$, and $\mathcal{D}=\mathbb{R}^{n_d}$, the disturbance minimization yields a linear constraint in $u$, which yields a QP when $\mathcal{U}$ is a polytope.
We illustrate the three disturbance-budget safety filters in \Cref{fig:dbint_analytic_leap_mechanisms}.

Given a LEAP $\Jfb$, we have constructed several safety filters that guarantee disturbance-budget safety.
In the next section, we propose a method of computing $\Jfb$ for general nonlinear systems using deep reinforcement learning, which is scalable to high-dimensional systems and does not require a custom solver for the min-over-time or max-over-time structure common with HJ-based approaches.

 \section{Solving LEAP with Deep Reinforcement Learning}
\label{sec:solve_leap}

\begin{algorithm}[t]
\caption{LEAP Training}
\label{alg:leap_training}
\begin{algorithmic}[1]
    \Require cost function $\ell$
    \State Randomly initialize fallback $\pifb$
    \For{$k=1,\ldots,K$}
        \State Train $(\piadv,\Vadv)$ using~\eqref{eq:rl_adv_obj}
        \State Train $\pifb$ using~\eqref{eq:rl_fb_obj}
    \EndFor
    \State Train final $(\piadv,\Vadv)$ using~\eqref{eq:rl_adv_obj}
    \State Polish $\Vadv$ by running deterministic rollouts of $\piadv$ under $\pifb$
    \State \Return $\pifb$ and $\Jfb \approx -\Vadv$
\end{algorithmic}
\end{algorithm}

We now describe how to solve the LEAP problem in practice.
Since the LEAP value is a optimal control problem with a sum-of-costs objective \eqref{eq:leap_value}, after discretizing time, we can apply standard deep reinforcement learning (RL) techniques.
This lets us approximate $\Jfb$ with standard cumulative-return RL, as in minimum-cost reach-avoid learning~\cite{so2024solving}; trajectory-extremum HJ objectives instead use corresponding Bellman updates or PDE solvers~\cite{hsunguyen2023isaacs,bansal2021deepreach}.

As shown in \Cref{alg:leap_training}, we propose an alternating optimization procedure using on-policy actor-critic RL (e.g., PPO~\cite{schulman2017proximal}).
Let $F_{\Delta t}(x,u,d)$ denote the one-step discretization of \eqref{eq:dyn}.
For a fixed fallback policy $\pifb$, the adversary searches for the least costly disturbance sequence that drives the system into failure.
Let $T$ denote the first time step at which $x_t\in\mathcal F$.
The adversary objective is
\begin{subequations}\label{eq:rl_adv_obj}
\begin{align}
    \piadv^\ast\in\argmin_{\piadv}\quad
    &\mathbb E\!\left[\sum_{t=0}^{T-1}
                    \gamma^t\ell(d_t)\Delta t\right] \\
    \text{s.t.}\quad
    &x_{t+1}=F_{\Delta t}\bigl(x_t,\pifb(x_t),d_t\bigr), \\
    &d_t\sim\piadv(\cdot\mid x_t).
\end{align}
\end{subequations}
We train $\piadv$ as a stochastic on-policy actor so that it explores nonzero disturbances rather than collapsing prematurely to $d_t=0$.
Because every nonzero disturbance incurs cost, the objective incentivizes the adversary to favor attacks that reach $\mathcal{F}$ with the least total effort, rather than attacks that simply use the largest disturbance at every step.

After each adversary update, we freeze its critic $\Vadv$ and train the fallback on nominal, undisturbed rollouts:
\begin{subequations}\label{eq:rl_fb_obj}
\begin{align}
    \pifb^\ast\in\argmin_{\pifb}\quad
    &\mathbb E\!\left[\max_{t\geq0}\Vadv(x_t)\right] \\
    \text{s.t.}\quad
    &x_{t+1}=F_{\Delta t}(x_t,u_t,0), \\
    &u_t\sim\pifb(\cdot\mid x_t).
\end{align}
\end{subequations}
Since $\Vadv(x)\approx-\Jfb(x)$, minimizing the largest critic value encountered along the rollout is equivalent to maximizing the smallest disturbance effort required to cause failure.

After $K$ alternating iterations, we retrain the adversary against the final fallback and polish its critic via deterministic policy evaluation.

\textbf{Improving the learned value function.}
We refine the critic to handle two cases: states where the fallback fails without disturbance, whose LEAP value is zero, and states where adversarial failure is unreachable, whose value is infinite.
From deterministic rollouts, we train a fallback-safety classifier $s_\eta(x)$ to predict failure under the undisturbed fallback and an adversary-reach classifier $c_\phi(x)$ to predict failure under the learned adversary--fallback pair.
With both policies fixed, we fit a critic $V_{\mathrm{feas}}$ on states classified as adversarially reachable using discounted $n$-step returns of the negative disturbance cost~\cite{sutton2018nstep}.
States in $\mathcal F\cup\{x:s_\eta(x)\geq\tau_s\}$ are treated as terminal with zero value, where $\tau_s$ is the fallback-failure threshold.
The deployed estimate combines this critic with a large finite cost cap $J_{\mathrm{cap}}>0$:
\begin{equation}
    \begin{aligned}
        \widehat J_{\pifb}(x)
        ={}&\mathbf{1}\!\left\{s_\eta(x)<\tau_s\right\}\bigl[
            c_\phi(x)\bigl(-V_{\mathrm{feas}}(x)\bigr)
            \\[-2pt]
            &{}+\bigl(1-c_\phi(x)\bigr)J_{\mathrm{cap}}\bigr].
    \end{aligned}
\end{equation}
Thus, predicted fallback-unsafe states receive zero effort, while states predicted unreachable by the adversary receive the cost cap.
We leave a formal convergence analysis of this coupled training procedure to future work.
Empirically, a single alternating iteration ($K=1$) is sufficient to obtain a good $\widehat{J}_{\pifb}$ across all of our experiments.

 \section{Simulation Experiments}
\label{sec:experiments}

We evaluate robustness to time-correlated steering noise, adversarial agent motion, and tracking latency in three simulation tasks (\Cref{fig:task_overview}).

\begin{figure}[H]
    \centering
    \subfloat[Steering noise.\label{fig:task_overview_dubins}]{\includegraphics[width=0.32\linewidth]{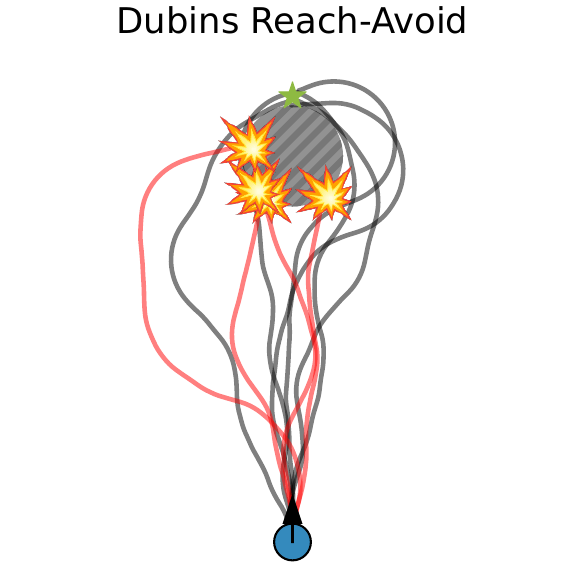}}
    \hfill
    \subfloat[Adversarial agent.\label{fig:task_overview_crowd}]{\includegraphics[width=0.32\linewidth]{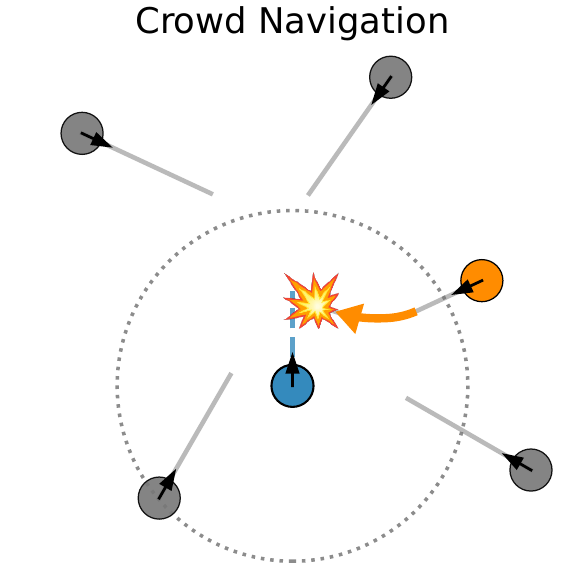}}
    \hfill
    \subfloat[Tracking latency.\label{fig:task_overview_swap}]{\includegraphics[width=0.32\linewidth]{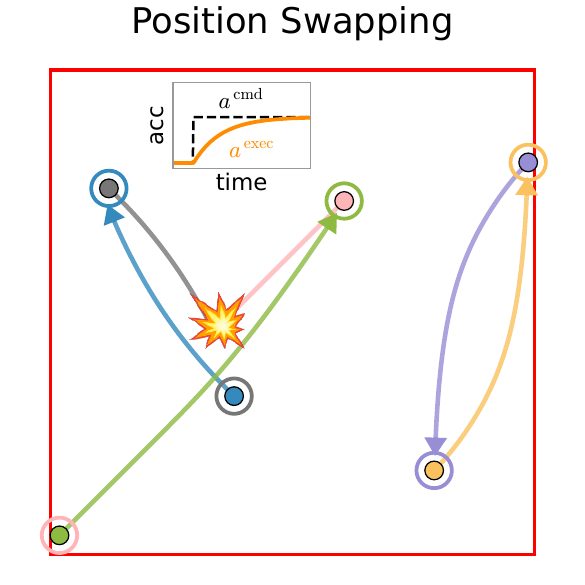}}

    \caption{\textbf{Simulation tasks and their test-time perturbations.}
    (a) \dubinsra{}: rollouts of a safety-filtered car from one start under time-correlated steering noise; rollouts that hit the obstacle are red.
    (b) \crowdnav{}: once inside the ego's sensing radius (dotted), an agent is briefly pushed toward the ego (orange).
    (c) \posswap{}: agents (discs) drive to their goals (rings) while their executed acceleration lags the command (inset).}
    \label{fig:task_overview}
\end{figure}

\noindent\textbf{Baselines.}
We compare against the following safety filters:
\begin{itemize}
    \item \textbf{HJ Reachability (\AlgHJ{})} \cite{choi2021cbvf, fisac2018general}: A learned HJ CBF.
    \item \textbf{HJ-Isaacs (\AlgHJI{})}\cite{hsunguyen2023isaacs}: The robust counterpart to \AlgHJ{} that learns a CBF under $d_{\max} \geq 0$ bounded disturbances.
    \item \textbf{Handcrafted CBF (\AlgHCBF{})} \cite{ames2016control}: A task-specific handcrafted CBF.
    \item \textbf{Robust Handcrafted CBF (\AlgRHCBF{})} \cite{jankovic2018robust}: Same as \AlgHCBF{}, but replaces \Cref{eq:cbf_qp:constraint} with its worst-case counterpart over box-bounded disturbances.
\end{itemize}

All methods are trained on a task-agnostic environment and evaluated on a perturbed task environment.
Adversary disturbance channels act through the ego controls, and additionally through other agent controls in multi-agent settings.
Each baseline evaluation runs for $200$ seeds per configuration in $\alpha \in \{0.1, 0.5\}$ and $20$ evenly-spaced $D \in [0, D_{\mathrm{ceiling}}]$ where $D_{\mathrm{ceiling}}$ is the maximum measured value of each baseline CBF.
More specifically, for a given $D$, the safety filter will use the baseline CBF $B(x)$ as $B(x) - D$ instead.
All runs for \AlgHJI{} and \AlgRHCBF{} use $d_{\max}=0.5$ by default.
For LEAP, we use the switching fallback policy with QP filtering as the nominal \Cref{thm:switch_qp} for simplicity as the disturbance-budget QP filter \eqref{eq:budget_qp} requires tracking cumulative disturbance effort.
We visualize the simulation tasks and their test-time disturbances in \Cref{fig:task_overview}. 

\subsection{Noisy Dubins Reach-Avoid}

\begin{figure}[H]
    \centering
    \includegraphics[width=1.0\linewidth]{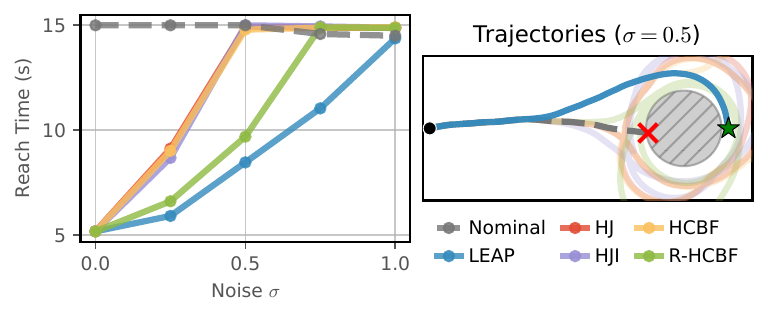}
    \caption{\textbf{Performance comparison on the noisy \dubinsra{} task.}
    Left: fastest goal-reaching time among filter configurations achieving at least $95\%$ safety, versus noise level $\sigma$.
    Right: one filtered trajectory per baseline.
    \AlgLEAP{} maintains a high safety rate and reaches the goal faster than the baselines as noise increases.
    }
    \label{fig:dubins_summary}
\end{figure}

In \dubinsra, a constant-speed Dubins car must steer to a goal while avoiding a circular obstacle.
Its state is $x=(p_x,p_y,\theta)$ with dynamics $\dot p_x=v\cos\theta$, $\dot p_y=v\sin\theta$, and $\dot\theta=\omega_{\max}\operatorname{clip}(u+d_{\max}d,-1,1)$, where $v$ is fixed and $u,d\in[-1,1]$ are the normalized steering command and disturbance.

\begin{equation}
    \begin{aligned}
        e_t &= \rho e_{t-1}+\sigma\sqrt{1-\rho^2}\,\xi_t,
        & \xi_t&\sim\mathcal{N}(0,1),\\
        u_t^{\mathrm{exec}}&=\operatorname{clip}(u_t+e_t,-1,1),
        & \dot\theta_t&=\omega_{\max}u_t^{\mathrm{exec}},\\
        \rho&=\exp(-\Delta t/\tau), & e_0&=0.
    \end{aligned}
    \label{eq:dubins_eval_noise}
\end{equation}

With correlation time $\tau=0.5\,\mathrm{s}$, \AlgLEAP{} reaches the goal faster than the baselines at the $95\%$ safety threshold as noise increases (\Cref{fig:dubins_summary}).
The example trajectories show \AlgLEAP{} passing close to the obstacle at a safe heading.
Because the disturbance effort depends on the ego's heading near the obstacle,
\AlgLEAP{} permits close passage when redirecting the car into collision would require substantial effort.

\subsection{Adversarial Crowd Navigation}

\begin{figure}[H]
    \centering
    \subfloat[Safety--completion tradeoff.\label{fig:crowd_pareto_goal_rate}]{\includegraphics[width=0.49\linewidth]{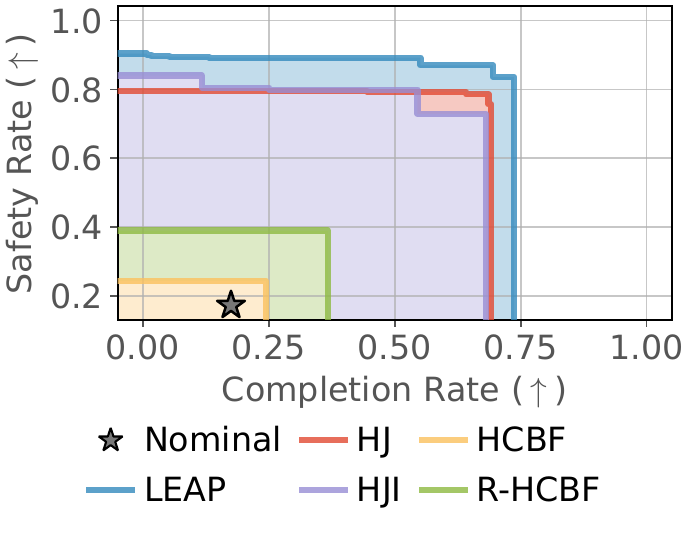}}
\hfill
    \subfloat[\AlgHJI{} disturbance sensitivity.\label{fig:crowd_pareto_goal_rate_hjiladder}]{\includegraphics[width=0.49\linewidth]{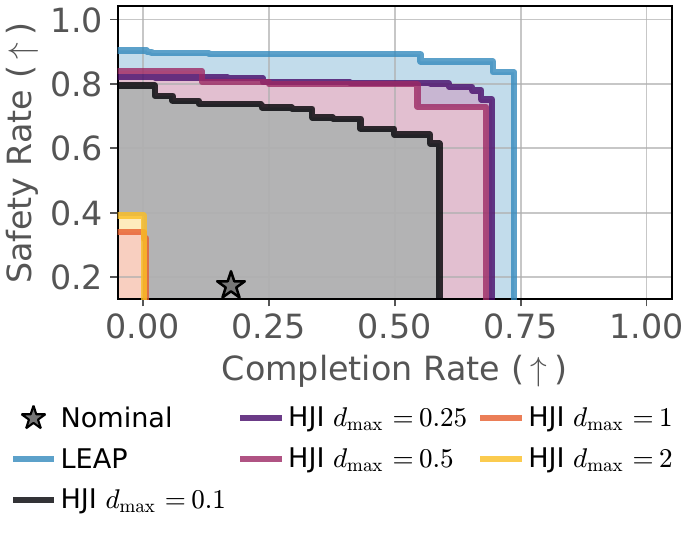}}
\caption{\textbf{Performance comparison on the \crowdnav{} task.}
    (a) Safety--completion Pareto frontiers over $\alpha$ and $D$.
    \AlgLEAP{}'s frontier dominates the baselines, achieving a better safety--completion tradeoff.
    (b) Sensitivity of \AlgHJI{} to its disturbance bound $d_{\max}$, compared with \AlgLEAP{} using $d_{\max}=\infty$.
    \AlgHJI{} requires tuning: small bounds provide insufficient robustness, whereas large bounds are overly conservative.}
    \label{fig:crowd_pareto_goal_rate_comparison}
\end{figure}

In the \crowdnav{} task, a unicycle ego must move through a crowd of five agents, whose nominal speed of $0.9\,\mathrm{m/s}$ exceeds the ego’s maximum speed of $0.6\,\mathrm{m/s}$, without colliding.
The ego and surrounding agents are represented by a graph neural network (GNN) whose node and edge features encode the relative states of the ego and nearby agents within a sensing radius.
At evaluation time, the ego's task is to reach a finish line at $y=12\,\mathrm{m}$ starting from $y=0\,\mathrm{m}$.
After an agent enters the ego's sensing radius, an adversary can briefly perturb its motion toward the ego.
We define safety rate as the fraction of episodes without a collision and completion rate as the fraction in which the ego crosses the finish line.

\AlgLEAP{} achieves a better safety--completion tradeoff than the baselines, including all evaluated \AlgHJI{} disturbance bounds (\Cref{fig:crowd_pareto_goal_rate_comparison}).
Small \AlgHJI{} bounds provide insufficient robustness, whereas large bounds restrict progress; \AlgLEAP{} uses a single $d_{\max}=\infty$.

\subsection{Tracking Latency in Position Swapping}

\begin{figure}[H]
    \centering
    \includegraphics[width=\linewidth]{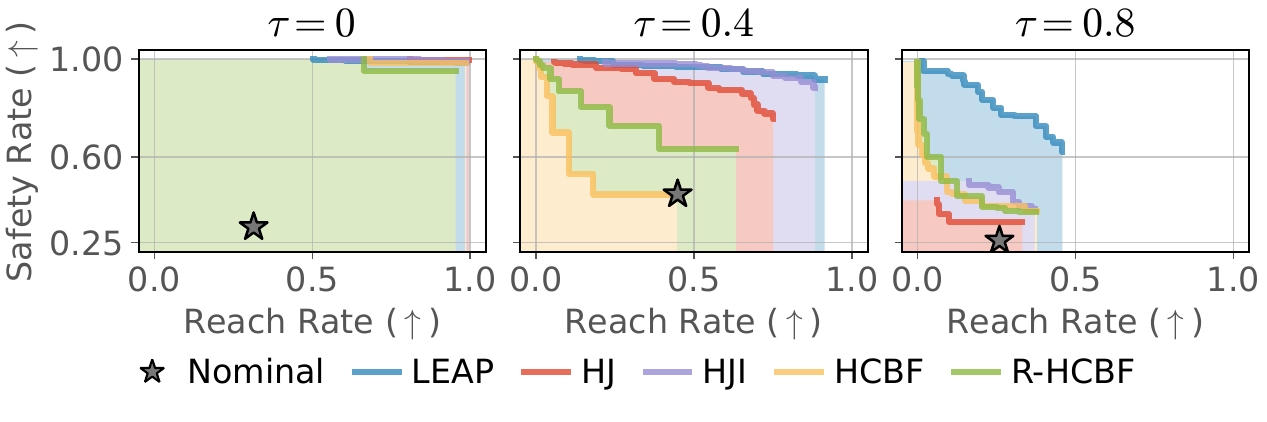}
    \caption{\textbf{Performance comparison on the \posswap{} task.}
    Safety--reach Pareto frontiers under increasing tracking latency $\tau$, which is absent during training.
    \AlgLEAP{} remains the most robust to increasing $\tau$ despite not observing it during training.
    }
    \label{fig:crowdv2_hwv3_pareto_reach}
\end{figure}

In the \posswap{} task, six double-integrator agents start at rest and are assigned one another's starting positions as goals; they must then swap positions without colliding.
CBF-based multirobot collision avoidance~\cite{wang2017safety} and learned graph certificates such as GCBF+~\cite{zhang2025gcbfplus} provide context for this task; here we evaluate robustness to tracking latency at a fixed team size.
Agents are again represented by a GNN with node features identifying object type and velocity, and edge features summarizing relative position, velocity, and distance.
At evaluation time, each agent $i$ employs a goal-tracking PD controller and executes in a first-order actuator model with time latency variable $\tau \geq 0$ mapping the commanded to executed acceleration via

\begin{equation}
    \begin{aligned}
        \lambda_\tau
        &=\begin{cases}
              0, & \tau=0,\\
              e^{-\Delta t/\tau}, & \tau>0,
        \end{cases}\\
        a^{\mathrm{exec}}_{i,t+1}
        &=\lambda_\tau a^{\mathrm{exec}}_{i,t}
        +(1-\lambda_\tau)a^{\mathrm{cmd}}_{i,t},\\
    \end{aligned}
    \label{eq:posswap_latency_dynamics}
\end{equation}

where $\tau=0$ introduces no tracking latency.

\AlgLEAP{} maintains the best safety--reach tradeoff as latency increases; \AlgHJI{} is competitive at moderate latency but degrades sharply at larger $\tau$ (\Cref{fig:crowdv2_hwv3_pareto_reach}).

\section{Hardware Experiments}
\label{sec:hardware}
We evaluate robustness to real-world disturbances in the \quadrupeddrone{} task.
A Unitree Go2 quadruped, modeled as a unicycle, follows a nominal controller through four waypoints while avoiding a Crazyflie drone~\cite{giernacki2017crazyflie}.
We localize both robots using the Crazyflie lighthouse system and send their commands from a centralized computer.
The safety filter controls only the Go2; the drone follows a separate controller that randomly changes its heading toward the Go2.
We treat these unexpected heading changes as disturbances during evaluation.

During training, the drone is modeled as a unicycle with constant heading and speed.
Neither the drone's evaluation behavior nor the nominal waypoint controller is available to the baseline methods during training; the waypoint controller is introduced only at deployment through the safety-filtering framework.
Thus, the hardware task tests robustness to departures from the modeled drone motion while preserving waypoint progress.

\Cref{fig:hw_bars} reports safety rate and distance to the final waypoint after a $30\,\mathrm{s}$ time limit.
\Cref{fig:hw_example} illustrates an unexpected drone turn into the Go2's path: \AlgHJ{} collides, whereas \AlgLEAP{} avoids the collision.

\begin{figure}[t]
 \centering
 \includegraphics[width=\linewidth]{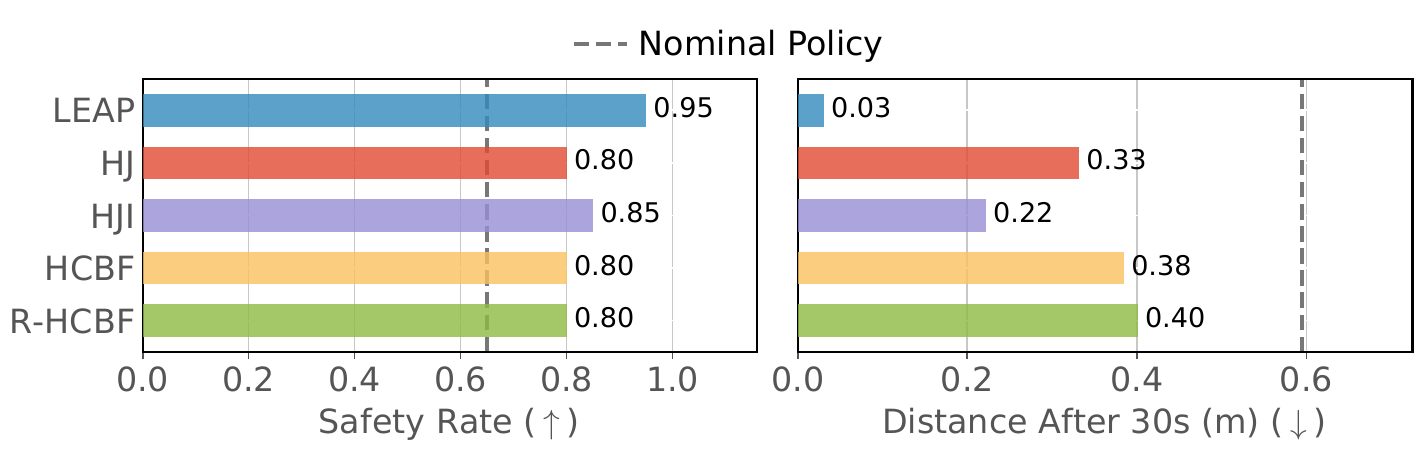}
 \caption{\textbf{Performance comparison on hardware \quadrupeddrone{} task.}
    Safety rate and final-waypoint distance after $30\,\mathrm{s}$.
    \AlgLEAP{} performs best, followed by \AlgHJI{}.
 }
 \label{fig:hw_bars}
\end{figure}
\begin{figure}[t]
 \centering
 \includegraphics[width=\linewidth]{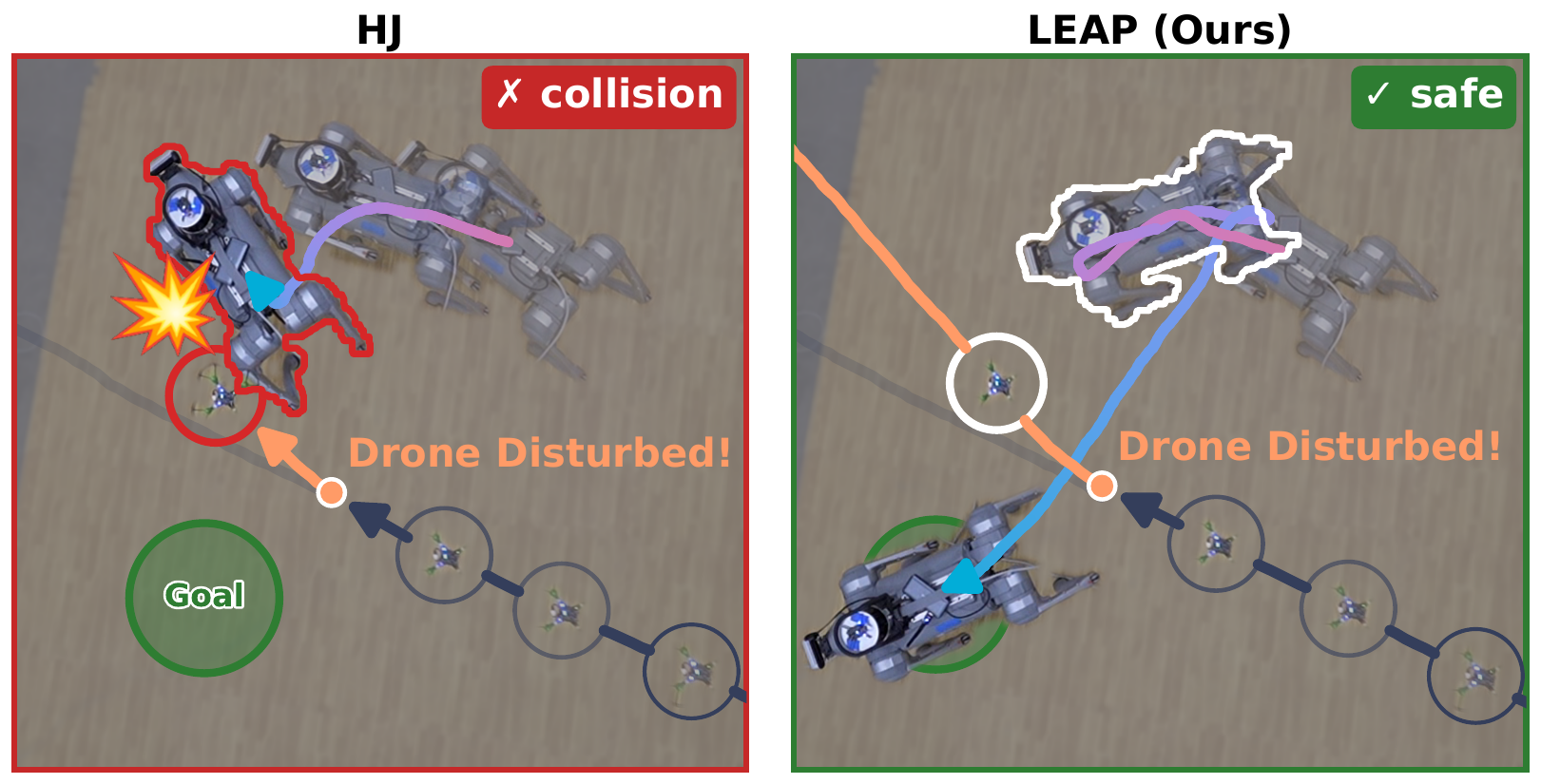}
 \caption{\textbf{Comparing \AlgHJ{} and \AlgLEAP{} on hardware.}
    When the Crazyflie turns into the Go2's path, \AlgHJ{} collides while \AlgLEAP{} avoids collision.
 }
\label{fig:hw_example}
\end{figure}

 \section{Conclusion}
\label{sec:conclusion}
We introduced Least-Effort Adversarial Potential (LEAP), which quantifies robustness as the minimum cumulative disturbance effort required to cause failure under a fixed fallback policy.
This gives the certificate's magnitude an operational meaning tied to the dynamics and disturbance model.
Under suitable regularity assumptions, positive-threshold shifts of the exact LEAP value are CBFs for the nominal system, and LEAP-based filters guarantee safety under a cumulative disturbance budget.
We developed an alternating reinforcement learning procedure to approximate LEAP and improve its fallback policy.
Simulation and hardware experiments demonstrate improved safety--task performance tradeoffs under control noise, adversarial interactions, and tracking latency.

\noindent\textbf{Limitations. }
Computing LEAP requires solving a challenging minimum-cost reachability problem.
Our classifier-based approximation is heuristic and is not guaranteed to converge to the exact value; the theoretical safety guarantees therefore do not automatically extend to the learned certificate.
More principled solvers, such as epigraph methods~\cite{so2024solving}, and verification of learned certificates are promising directions.
LEAP also depends on the quality of its fallback policy.
Although our alternating fallback--adversary training was stable empirically, its convergence remains unproven.
Future work will investigate these approximation and training limitations, as well as connections to other robustness properties.

 \bibliographystyle{IEEEtran}
\bibliography{ref}

@IEEEtranBSTCTL{noUrl,
    CTLuse_url = "no"
}

@inproceedings{so2024train,
    title = {How to train your neural control barrier function: Learning safety filters for complex input-constrained systems},
    author = {So, Oswin and Serlin, Zachary and Mann, Makai and Gonzales, Jake and Rutledge, Kwesi and Roy, Nicholas and Fan, Chuchu},
    booktitle = {2024 IEEE International Conference on Robotics and Automation (ICRA)},
    pages = {11532--11539},
    year = {2024},
    organization = {IEEE}
}

@misc{tan2023value,
    title = {Value Functions are Control Barrier Functions: Verification of Safe Policies using Control Theory},
    author = {Tan, Daniel C. H. and Acero, Fernando and McCarthy, Robert and Kanoulas, Dimitrios and Li, Zhibin},
    year = {2023},
    eprint = {2306.04026},
    archivePrefix = {arXiv},
    primaryClass = {cs.LG},
    url = {https://arxiv.org/abs/2306.04026}
}

@inproceedings{teoh2026madr,
    title = {{MADR}: {MPC}-guided Adversarial {DeepReach}},
    author = {Teoh, Ryan and Tonkens, Sander and Sharpless, William and Yang, Aijia and Feng, Zeyuan and Bansal, Somil and Herbert, Sylvia},
    booktitle = {2026 IEEE International Conference on Robotics and Automation (ICRA)},
    year = {2026},
    url = {https://arxiv.org/abs/2510.18845}
}

@inproceedings{dean2021guaranteeing,
    title = {Guaranteeing Safety of Learned Perception Modules via Measurement-Robust Control Barrier Functions},
    author = {Dean, Sarah and Taylor, Andrew J. and Cosner, Ryan K. and Recht, Benjamin and Ames, Aaron D.},
    booktitle = {Proceedings of the 2020 Conference on Robot Learning},
    series = {Proceedings of Machine Learning Research},
    volume = {155},
    pages = {654--670},
    year = {2021},
    publisher = {PMLR},
    url = {https://proceedings.mlr.press/v155/dean21a.html}
}

@article{yin2020reachability,
    title = {Reachability Analysis Using Dissipation Inequalities for Uncertain Nonlinear Systems},
    author = {Yin, He and Packard, Andrew and Arcak, Murat and Seiler, Peter},
    journal = {Systems \& Control Letters},
    volume = {142},
    pages = {104736},
    year = {2020},
    doi = {10.1016/j.sysconle.2020.104736},
    url = {https://arxiv.org/abs/1808.02585}
}

@article{yin2021backward,
    title = {Backward Reachability Using Integral Quadratic Constraints for Uncertain Nonlinear Systems},
    author = {Yin, He and Seiler, Peter and Arcak, Murat},
    journal = {IEEE Control Systems Letters},
    volume = {5},
    number = {2},
    pages = {707--712},
    year = {2021},
    doi = {10.1109/LCSYS.2020.3005315},
    url = {https://arxiv.org/abs/2003.05617}
}

@article{alan2022safe,
    title = {Safe Controller Synthesis With Tunable Input-to-State Safe Control Barrier Functions},
    author = {Alan, Anil and Taylor, Andrew J. and He, Chaozhe R. and Orosz, G{\'a}bor and Ames, Aaron D.},
    journal = {IEEE Control Systems Letters},
    volume = {6},
    pages = {908--913},
    year = {2022},
    doi = {10.1109/LCSYS.2021.3087443},
    url = {https://arxiv.org/abs/2103.08041}
}

@article{wabersich2021predictive,
    title = {A Predictive Safety Filter for Learning-Based Control of Constrained Nonlinear Dynamical Systems},
    author = {Wabersich, Kim Peter and Zeilinger, Melanie N.},
    journal = {Automatica},
    volume = {129},
    pages = {109597},
    year = {2021},
    doi = {10.1016/j.automatica.2021.109597}
}

@article{wabersich2023predictive,
    title = {Predictive Control Barrier Functions: Enhanced Safety Mechanisms for Learning-Based Control},
    author = {Wabersich, Kim Peter and Zeilinger, Melanie N.},
    journal = {IEEE Transactions on Automatic Control},
    volume = {68},
    number = {5},
    pages = {2638--2651},
    year = {2023},
    doi = {10.1109/TAC.2022.3175628},
    url = {https://arxiv.org/abs/2105.10241}
}

@inproceedings{pinto2017robust,
    title = {Robust Adversarial Reinforcement Learning},
    author = {Pinto, Lerrel and Davidson, James and Sukthankar, Rahul and Gupta, Abhinav},
    booktitle = {Proceedings of the 34th International Conference on Machine Learning},
    series = {Proceedings of Machine Learning Research},
    volume = {70},
    pages = {2817--2826},
    year = {2017},
    publisher = {PMLR},
    url = {https://proceedings.mlr.press/v70/pinto17a.html}
}

@inproceedings{taylor2020learning,
    title = {Learning for Safety-Critical Control with Control Barrier Functions},
    author = {Taylor, Andrew and Singletary, Andrew and Yue, Yisong and Ames, Aaron},
    booktitle = {Proceedings of the 2nd Conference on Learning for Dynamics and Control},
    series = {Proceedings of Machine Learning Research},
    volume = {120},
    pages = {708--717},
    year = {2020},
    publisher = {PMLR},
    url = {https://proceedings.mlr.press/v120/taylor20a.html}
}

@inproceedings{bansal2021deepreach,
    title = {{DeepReach}: A Deep Learning Approach to High-Dimensional Reachability},
    author = {Bansal, Somil and Tomlin, Claire J.},
    booktitle = {2021 IEEE International Conference on Robotics and Automation (ICRA)},
    year = {2021},
    url = {https://arxiv.org/abs/2011.02082}
}

@article{wang2017safety,
    title = {Safety Barrier Certificates for Collisions-Free Multirobot Systems},
    author = {Wang, Li and Ames, Aaron D. and Egerstedt, Magnus},
    journal = {IEEE Transactions on Robotics},
    volume = {33},
    number = {3},
    pages = {661--674},
    year = {2017},
    doi = {10.1109/TRO.2017.2659727}
}

@article{zhang2025gcbfplus,
    title = {{GCBF+}: A Neural Graph Control Barrier Function Framework for Distributed Safe Multiagent Control},
    author = {Zhang, Songyuan and So, Oswin and Garg, Kunal and Fan, Chuchu},
    journal = {IEEE Transactions on Robotics},
    volume = {41},
    pages = {1533--1552},
    year = {2025},
    doi = {10.1109/TRO.2025.3530348},
    url = {https://github.com/MIT-REALM/gcbfplus}
}

@article{ames2016control,
    title = {Control barrier function based quadratic programs for safety critical systems},
    author = {Ames, Aaron D and Xu, Xiangru and Grizzle, Jessy W and Tabuada, Paulo},
    journal = {IEEE transactions on automatic control},
    volume = {62},
    number = {8},
    pages = {3861--3876},
    year = {2016},
    publisher = {IEEE}
}

@inproceedings{ames2019cbfsurvey,
    author = {Ames, Aaron D. and Coogan, Samuel and Egerstedt, Magnus and Notomista, Gennaro and Sreenath, Koushil and Tabuada, Paulo},
    booktitle = {2019 18th European Control Conference (ECC)},
    title = {Control Barrier Functions: Theory and Applications},
    year = {2019},
    volume = {},
    number = {},
    pages = {3420-3431},
    doi = {10.23919/ECC.2019.8796030} }

@inproceedings{choi2021cbvf,
    author = {Choi, Jason J. and Lee, Donggun and Sreenath, Koushil and Tomlin, Claire J. and Herbert, Sylvia L.},
    title = {Robust Control Barrier-Value Functions for Safety-Critical Control},
    year = {2021},
    publisher = {IEEE Press},
    url = {https://doi.org/10.1109/CDC45484.2021.9683085},
    doi = {10.1109/CDC45484.2021.9683085},
    booktitle = {2021 60th IEEE Conference on Decision and Control (CDC)},
    pages = {6814-6821},
    numpages = {8},
    location = {Austin, TX, USA}
}

@article{oh2026synthesis,
    title = {Synthesis and Deployment of Maximal Robust Control Barrier Functions through Adversarial Reinforcement Learning},
    author = {Oh, Donggeon David and Nguyen, Duy P and Hu, Haimin and Fisac, Jaime Fern{\'a}ndez},
    journal = {arXiv preprint arXiv:2604.13192},
    year = {2026}
}

@article{xu2015robustness,
    title = {Robustness of Control Barrier Functions for Safety Critical Control},
    author = {Xu, Xiangru and Tabuada, Paulo and Grizzle, Jessy W and Ames, Aaron D},
    journal = {IFAC-PapersOnLine},
    volume = {48},
    number = {27},
    pages = {54--61},
    year = {2015},
    publisher = {Elsevier},
    url = {http://ames.caltech.edu/ADHS15_Final.pdf}
}

@inproceedings{nguyen2024gameplayfiltersrobustzeroshot,
    title = {Gameplay Filters: Robust Zero-Shot Safety through Adversarial Imagination},
    author = {Duy P. Nguyen and Kai-Chieh Hsu and Wenhao Yu and Jie Tan and Jaime F. Fisac},
    booktitle = {Proceedings of The 8th Conference on Robot Learning},
    series = {Proceedings of Machine Learning Research},
    volume = {270},
    pages = {387--407},
    year = {2025},
    publisher = {PMLR},
    url = {https://proceedings.mlr.press/v270/nguyen25a.html},
}

@inproceedings{hsunguyen2023isaacs,
    title = {ISAACS: Iterative Soft Adversarial Actor-Critic for Safety},
    author = {Hsu, Kai-Chieh and Nguyen, Duy Phuong and Fisac, Jaime Fern\`andez},
    booktitle = {Proceedings of the 5th Annual Learning for Dynamics and Control Conference},
    pages = {90--103},
    year = {2023},
    editor = {Matni, Nikolai and Morari, Manfred and Pappas, George J.},
    volume = {211},
    series = {Proceedings of Machine Learning Research},
    month = {15--16 Jun},
    publisher = {PMLR},
    url = {https://proceedings.mlr.press/v211/hsu23a.html}
}

@article{seiler2021control,
    title = {Control barrier functions with unmodeled input dynamics using integral quadratic constraints},
    author = {Seiler, Peter and Jankovic, Mrdjan and Hellstrom, Erik},
    journal = {IEEE Control Systems Letters},
    volume = {6},
    pages = {1664--1669},
    year = {2022},
    doi = {10.1109/LCSYS.2021.3130782},
    url = {https://arxiv.org/abs/2108.10491},
    publisher = {IEEE}
}

@article{knoedler2025safety,
    title = {Safety on the fly: Constructing robust safety filters via policy control barrier functions at runtime},
    author = {Knoedler, Luzia and So, Oswin and Yin, Ji and Black, Mitchell and Serlin, Zachary and Tsiotras, Panagiotis and Alonso-Mora, Javier and Fan, Chuchu},
    journal = {IEEE Robotics and Automation Letters},
    year = {2025},
    publisher = {IEEE}
}

@article{fisac2018general,
    title = {A general safety framework for learning-based control in uncertain robotic systems},
    author = {Fisac, Jaime F and Akametalu, Anayo K and Zeilinger, Melanie N and Kaynama, Shahab and Gillula, Jeremy and Tomlin, Claire J},
    journal = {IEEE Transactions on Automatic Control},
    volume = {64},
    number = {7},
    pages = {2737--2752},
    year = {2018},
    publisher = {IEEE}
}

@article{kolathaya2019input,
    title = {Input-to-State Safety With Control Barrier Functions},
    author = {Kolathaya, Shishir and Ames, Aaron D.},
    journal = {IEEE Control Systems Letters},
    volume = {3},
    number = {1},
    pages = {108--113},
    year = {2019},
    doi = {10.1109/LCSYS.2018.2853698},
    url = {https://arxiv.org/abs/1803.03035}
}

@inproceedings{chen2021backup,
    title = {Backup Control Barrier Functions: Formulation and Comparative Study},
    author = {Chen, Yuxiao and Jankovic, Mrdjan and Santillo, Mario and Ames, Aaron D.},
    booktitle = {2021 60th IEEE Conference on Decision and Control (CDC)},
    year = {2021},
    url = {https://arxiv.org/abs/2104.11332}
}

@article{lee2020adaptive,
    title = {Adaptive Stress Testing: Finding Likely Failure Events with Reinforcement Learning},
    author = {Lee, Ritchie and Mengshoel, Ole J. and Saksena, Anshu and Gardner, Ryan and Genin, Daniel and Silbermann, Joshua and Owen, Michael and Kochenderfer, Mykel J.},
    journal = {Journal of Artificial Intelligence Research},
    year = {2020},
    url = {https://arxiv.org/abs/1811.02188}
}

@article{lygeros2004reachability,
    title = {On Reachability and Minimum Cost Optimal Control},
    author = {Lygeros, John},
    journal = {Automatica},
    volume = {40},
    number = {6},
    pages = {917--927},
    year = {2004},
    doi = {10.1016/j.automatica.2004.01.012}
}

@misc{liao2021costlimited,
    title = {Computation of Reachable Sets Based on {Hamilton-Jacobi-Bellman} Equation with Running Cost Function},
    author = {Liao, Weiwei and Liang, Tao},
    year = {2021},
    eprint = {2107.11941},
    archivePrefix = {arXiv},
    url = {https://arxiv.org/abs/2107.11941}
}

@inproceedings{tonkens2022refining,
    title = {Refining Control Barrier Functions through {Hamilton-Jacobi} Reachability},
    author = {Tonkens, Sander and Herbert, Sylvia},
    booktitle = {2022 IEEE/RSJ International Conference on Intelligent Robots and Systems (IROS)},
    year = {2022},
    url = {https://arxiv.org/abs/2204.12507}
}

@inproceedings{tonkens2024patching,
    title = {Patching Approximately Safe Value Functions Leveraging Local {Hamilton-Jacobi} Reachability Analysis},
    author = {Tonkens, Sander and Toofanian, Alex and Qin, Zhizhen and Gao, Sicun and Herbert, Sylvia},
    booktitle = {2024 IEEE 63rd Conference on Decision and Control (CDC)},
    year = {2024},
    url = {https://arxiv.org/abs/2304.09850}
}

@article{hsu2023safety,
    title = {The safety filter: A unified view of safety-critical control in autonomous systems},
    author = {Hsu, Kai-Chieh and Hu, Haimin and Fisac, Jaime F},
    journal = {Annual Review of Control, Robotics, and Autonomous Systems},
    volume = {7},
    year = {2023},
    publisher = {Annual Reviews}
}

@inproceedings{lyu2025integral,
    title = {Integral Input-to-State Safe Barrier Functions},
    author = {Lyu, Ziliang and Fang, Xu and Yuan, Heling and Li, Xiuxian and Hong, Yiguang and Xie, Lihua},
    booktitle = {2025 IEEE 64th Conference on Decision and Control (CDC)},
    pages = {3901--3906},
    year = {2025},
    organization = {IEEE},
    doi = {10.1109/CDC57313.2025.11312407}
}

@article{so2024solving,
    title = {Solving minimum-cost reach avoid using reinforcement learning},
    author = {So, Oswin and Ge, Cheng and Fan, Chuchu},
    journal = {Advances in Neural Information Processing Systems},
    volume = {37},
    pages = {30951--30984},
    year = {2024}
}

@inproceedings{bansal2017hamilton,
    title = {Hamilton-jacobi reachability: A brief overview and recent advances},
    author = {Bansal, Somil and Chen, Mo and Herbert, Sylvia and Tomlin, Claire J},
    booktitle = {2017 IEEE 56th annual conference on decision and control (CDC)},
    pages = {2242--2253},
    year = {2017},
    organization = {IEEE}
}

@book{bardi1997optimal,
    title = {Optimal control and viscosity solutions of Hamilton-Jacobi-Bellman equations},
    author = {Bardi, Martino and Dolcetta, Italo Capuzzo and others},
    volume = {12},
    year = {1997},
    publisher = {Springer}
}

@article{schulman2017proximal,
    title = {Proximal policy optimization algorithms},
    author = {Schulman, John and Wolski, Filip and Dhariwal, Prafulla and Radford, Alec and Klimov, Oleg},
    journal = {arXiv preprint arXiv:1707.06347},
    year = {2017}
}

@inproceedings{giernacki2017crazyflie,
    title = {Crazyflie 2.0 quadrotor as a platform for research and education in robotics and control engineering},
    author = {Giernacki, Wojciech and Skwierczy{\'n}ski, Mateusz and Witwicki, Wojciech and Wro{\'n}ski, Pawe{\l} and Kozierski, Piotr},
    booktitle = {2017 22nd international conference on methods and models in automation and robotics (MMAR)},
    pages = {37--42},
    year = {2017},
    organization = {IEEE}
}

@article{thrun2002probabilistic,
    title = {Probabilistic robotics},
    author = {Thrun, Sebastian},
    journal = {Communications of the ACM},
    volume = {45},
    number = {3},
    pages = {52--57},
    year = {2002},
    publisher = {ACM New York, NY, USA}
}

@article{jankovic2018robust,
    title = {Robust control barrier functions for constrained stabilization of nonlinear systems},
    journal = {Automatica},
    volume = {96},
    pages = {359-367},
    year = {2018},
    issn = {0005-1098},
    doi = {10.1016/j.automatica.2018.07.004},
    url = {https://www.sciencedirect.com/science/article/pii/S0005109818303509},
    author = {Mrdjan Jankovic},
}

@inbook{sutton2018nstep,
  author    = {Sutton, Richard S. and Barto, Andrew G.},
  title     = {Reinforcement Learning: An Introduction},
  chapter   = {7},
  pages     = {168-172},
  publisher = {MIT Press},
  year      = {2018},
  edition   = {Second},
  address   = {Cambridge, MA}
}

@article{sorensen2026time,
  title={A Time-to-Collision Barrier Function Approach to Collision Avoidance for Stochastic Systems},
  author={Sorensen, Benedikt Barthel and Black, Mitchell and Noorani, Erfaun and Sapsis, Themistoklis P},
  journal={arXiv preprint arXiv:2609.17347},
  year={2026}
}
\clearpage
\appendices
\end{document}